%% file: main.tex
\documentclass{article}

\usepackage{microtype}
\usepackage{graphicx}
\usepackage{subcaption}
\usepackage{booktabs} 
\usepackage{xspace}
\usepackage{hyperref}

\usepackage[preprint]{icml2026}

\input{math_commands}

\usepackage{amsthm, amssymb, dsfont}
\usepackage{hyperref}
\usepackage{booktabs}
\hypersetup{
  colorlinks=true,
  linkcolor=blue!60!black,
  urlcolor=blue!70!black,
  citecolor=blue!60!black
}
\usepackage[table]{xcolor}
\usepackage{arydshln}
\usepackage{enumitem}
\usepackage{listings}
\usepackage{fontawesome5}

\theoremstyle{plain}
\newtheorem{theorem}{Theorem}[section]
\newtheorem{proposition}[theorem]{Proposition}

\theoremstyle{definition}

\theoremstyle{remark}

\newcommand{\ourmethod}{\textsc{Loscar}\xspace}

\icmltitlerunning{Residual-Based Causal Discovery via Pretrained In-Context~Quantile Regression}

\begin{document}

\twocolumn[
  \icmltitle{Residual-Based Causal Discovery via Pretrained In-Context~Quantile Regression}



  \icmlsetsymbol{equal}{*}

  \begin{icmlauthorlist}
    \icmlauthor{Daniel Klippert}{tudo,rc}
    \icmlauthor{Quang Nhat Huynh}{wwu}
    \icmlauthor{Alexander Marx}{tudo,rc}
  \end{icmlauthorlist}
  \begin{center}
    \small
    \faGithub\;
    \href{https://github.com/causality-lab/loscar}{https://github.com/causality-lab/loscar}
  \end{center}

  \icmlaffiliation{tudo}{Department of Statistics, TU Dortmund University, Germany}
  \icmlaffiliation{rc}{Research Center Trustworthy Data Science and Security, University Alliance Ruhr, Germany}
  \icmlaffiliation{wwu}{Department of Economics, University of Münster, Germany}

  \icmlcorrespondingauthor{Daniel Klippert}{daniel.klippert@tu-dortmund.de}
  \icmlcorrespondingauthor{Alexander Marx}{alexander.marx@tu-dortmund.de}

  \icmlkeywords{Machine Learning, Causal Discovery, Functional Causal Models, Heteroscedasticity, Quantile Regression, In-Context Regression}

  \vskip 0.3in
]



\printAffiliationsAndNotice{}

\begin{abstract}
  Residual-based causal discovery assumes that each variable is generated from its parents and an independent noise source. It estimates this noise and tests its independence from candidate causes. For identifiable functions such as additive-noise models (ANMs) and the broader class of location-scale noise models (LSNMs), this principle identifies the causal direction, as only the causal model admits this independence. Existing LSNM methods rely on parametric assumptions to obtain standardized residuals, risking model misspecification. In contrast, the RESIT-framework allows for nonparametric estimation of residuals to iteratively identify sink nodes, but was proposed for ANMs. To close this gap, we propose \ourmethod, an extension of RESIT to LSNMs that estimates the conditional mean and scale via quantile regression, enabling pretrained tabular foundation models as plug-in estimators. We compare it with a feasible generalized least squares (FGLS) variant, which is consistent given suitable basis functions. Across bivariate and multivariate causal-discovery benchmarks, in-context quantile regression substantially outperforms FGLS with flexible function classes, particularly for multivariate causes.
\end{abstract}

\section{Introduction}

Causal discovery provides a rigorous framework to determine under which conditions causal relations can be derived from observational data~\citep{pc,peters:17:book}. Provided that all relevant sources of variation are observed, i.e., it can be assumed that confounding and selection bias are absent, and no cyclic relations exist, the causal graph can be described as a directed acyclic graph (DAG).

Constraint-based approaches rely on conditional independence testing to recover the Markov equivalence class~\citep{pc}, whereas methods restricting the function class of the structural causal model identify the unique DAG. Early approaches showed identifiability of the DAG under linear non-Gaussian additive models~\citep[LiNGAM]{LiNGAM}. Both the linearity assumption~\citep{hoyer:09:nonlinear} and the additivity assumption have been relaxed by subsequent works that prove identification results for post-nonlinear~\citep[PNL]{PNL} and location-scale noise models~\citep[LSNMs]{LOCI,GRCI}.

Alongside this line of research, another promising direction involves amortized approaches to causal discovery~\citep{lorch:22:amortized-cd,helby:22:supervised-cd}, which treat causal discovery as a supervised machine learning problem by training on a large corpus of synthetically generated data. Benefiting from the advantages of transformer models~\citep{thompson:26:arrow} or the success of tabular foundation models~\citep{xu:26:in-context}, these approaches are scalable and show strong empirical results. These works aim to encode known identification results through drawing the training data from such data generative models, however, they lack formal guarantees and require extensive pretraining.

\paragraph{Our Contributions} Taking a different approach, we integrate pretrained tabular foundation models~\citep{hollmann:23:tabpfn} as plug-in estimators. Our method, \ourmethod, extends the well-studied RESIT algorithm~\citep{RESIT} to LSNMs. In contrast to previous work on LSNMs that rely on distributional assumptions on the noise variable~\citep{LOCI,duong:2023:heteroscedastic,ROCHE,klippert:26:skew} or symmetric noise variables~\citep{skewscore} we show that nonparametric estimation is feasible by estimating the conditional standard deviation via quantile regression. This result enables us to harness the predictive capabilities of pretrained tabular foundation models such as TabPFN~\citep{hollmann:23:tabpfn} through in-context quantile regression without the need of retraining or fine-tuning the model. We propose two strategies through quantile regression and compare them to a variant using feasible GLS~\citep{greene:2000:econometric} which given appropriate basis functions consistently estimates the conditional mean and standard deviation without distributional assumptions. Through empirical evaluation on 19 bivariate benchmarks we show that \ourmethod using in-context quantile regression achieves state-of-the-art performance and outperforms the FGLS variant on flexible function classes (Figure~\ref{fig:radar-comparision-loscar-variants}). Further, we evaluate \ourmethod on multivariate simulated and real-world benchmarks obtaining competitive results while being scalable in low and medium dimensional settings.

We first review LSNMs and their estimation through conditional mean and variance in Section~\ref{sec:preliminaries}, followed by a description of \ourmethod. In Section~\ref{sec:QR}, we derive two approaches that enable in-context quantile regression and describe the FGLS variant in Section~\ref{sec:FGLS}. We discuss related work in Section~\ref{sec:related-work}, present experiments in Section~\ref{sec:experiments}, and conclude thereafter.

\begin{figure}
    \hspace{2.2em}
    \includegraphics[width=0.83\linewidth]{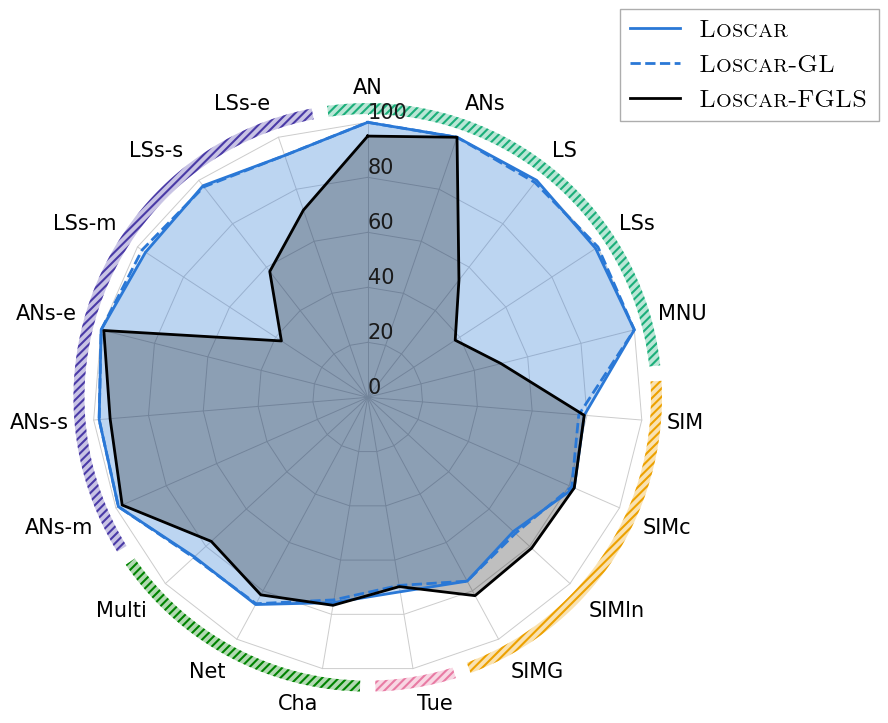}
    \caption{Accuracy in \% over 19 bivariate benchmarks. \ourmethod and \textsc{\ourmethod-GL} use TabPFN for in-context quantile regression, computing the conditional scale using different variants (Section~\ref{sec:QR}). \textsc{\ourmethod-FGLS} (Section~\ref{sec:FGLS}) employs spline-based FGLS. \ourmethod has an average accuracy of $90.04 \%$. While \ourmethod-GL performs on par, the FGLS variant struggles to model complex location-scale relationships.}
    \label{fig:radar-comparision-loscar-variants}
\end{figure}

\section{Residual-Based Causal Discovery} \label{sec:preliminaries}

Below we describe the framework for residual-based causal discovery. First we review LSNMs, which include ANMs as a special case, and their identification results. Then, we discuss the basis of bivariate causal discovery via residual-based independence testing and how these ideas extend to the multivariate setting.

\subsection{Location-Scale Noise Models}
\label{sec:lsnms-intro}

When latent confounding, cycles, and selection are absent, we can model the causal network as a directed acyclic graph (DAG). A causal DAG model is a tuple $(\gG, P_\mX)$, where $\gG$ defines a DAG and $P_\mX$ a probability distribution associated with the nodes of $\gG$, that has been generated through a structural causal model (SCM). In particular, an SCM \(\gC = (\bm{S}, P_{\bm{N}})\) is defined as a collection of assignments \(\bm{S}\): 
\begin{align*}
\label{eq:scm}
X^{(j)} := f^{(j)}(\mathrm{PA}_{X^{(j)}}^{\gG}, N_{X^{(j)}}), \quad \quad j = 1, \dots, d,
\end{align*}
where all \(N_{X^{(j)}}\) (exogenous or noise variables) are jointly independent so that \(P_\mN\) is a product distribution. The DAG \(\gG = (\mV, \gE)\) of an SCM is obtained by creating a vertex \(j \in \mV\), with $|\mV| = d$, for each \(X^{(j)}\) and drawing directed edges \((i,j) \in \gE\) from each parent \(i \in \mathrm{PA}_{X^{(j)}}^{\gG}\) to \(j\). Without restricting the function class, recovery of the DAG $\gG$ is only feasible up to its Markov equivalence class~\citep{peters:17:book}, for example, through assuming faithfulness.

In this paper, we consider the class of \emph{location-scale noise models}~\citep{LOCI}. That is, an SCM is called a location-scale noise model (LSNM) if, for each variable $X^{(j)}$ with non-empty parent set $\mathrm{PA}_{X^{(j)}}^{\mathcal G} \neq \emptyset$, the structural equation of $X^{(j)}$ is given by
\begin{equation} \label{eq:Def1}
X^{(j)} := f^{(j)}(\mathrm{PA}_{X^{(j)}}^{\mathcal G})+g^{(j)}(\mathrm{PA}_{X^{(j)}}^{\mathcal G}) N_{X^{(j)}},
\end{equation}
where $f^{(j)}$ and $g^{(j)}$ are deterministic real-valued functions with $g^{(j)}(\cdot)>0$. If $\mathrm{PA}_{X^{(j)}}^{\mathcal G} = \emptyset$, then $X^{(j)} = N_{X^{(j)}}$. LSNMs naturally extend the class of additive noise models (ANMs)~\citep{kano:03:anms,hoyer:09:nonlinear} and reduce to an ANM when $g^{(j)}(\cdot) \equiv 1$.

Identifiability of the causal graph under LNSMs has been studied by \citet{LOCI,GRCI} for bivariate models. \citet{GRCI} extend these results to the multivariate setting. They conclude that identifiablitly holds up to pathological cases. Related results cover the exact identifiability conditions for Gaussian noise~\citep{khemakhem:21:autoregressive}, uniform and continuous Bernoulli~\citep{sunschulte}, and exponential families~\citep{Expfam}. Below, we discuss how these results can be exploited by practical estimators.

\subsection{Cause-Effect Inference by Residual-Based Independence Testing} 
\label{subsec:residual-indep}

We start by describing the bivariate setting in which we call the two involved random variables $X$ and $Y$. If the data has been generated according to an LSNM encoding the graph $X \to Y$, i.e., $Y := f(X) + g(X) N_Y$, it holds that, by assumption,
\(
    (Y - f(X))/g(X) = N_Y
\)
is independent of $X$.
If the LSNM is identifiable, such an independence relation does not exist for the anti-causal direction, i.e., we cannot find functions $h$ and $k$ such that $Y$ is independent of $N_X=(X - h(Y))/k(Y)$. This approach has its roots in the RESIT (short for regression with subsequent independence testing) framework~\citep{RESIT} that has been developed for ANMs. As in ANMs $g(\cdot) \equiv 1$, RESIT approximates the noise terms by the residuals obtained through standard regression for both hypothetical causal models and decides for the model which does not reject the null hypothesis of independence between noise and hypothesized cause. If independence is rejected in both causal directions, the direction with the higher $p$-value is selected in cases where a decision is forced.\!\footnote{Such a result, however, rather indicates confounding or model misspecification~\citep{schultheiss:24:jmlr-well-specification}.} 

In LSNMs, the noise is typically extracted through estimating $f$ and $g$. When the noise follows a standard Gaussian distribution, \citet{LOCI} provide a weakly consistent spline-based estimator. To relax the Gaussianity assumptions, subsequent work has developed estimators that are more robust to heavy-tailed~\citep{ROCHE} or skewed noise~\citep{klippert:26:skew}, which led to performance improvements on standard benchmarks. In this paper, we build upon the result of \citet{sunschulte}, who show that \emph{consistently estimating mean and standard deviation is sufficient.} We provide a derivation of this result below and propose appropriate estimators in Section~\ref{sec:estimation}.

\paragraph{Standardized Residuals} Given an LSNM for $X \to Y$, as defined in Eq.~\eqref{eq:Def1}, then~\citep{sunschulte}
\begin{equation}
    \label{eq:estimating-ny-scaled}
    \frac{Y - \E[Y|X]}{\mathrm{Sd}(Y|X)} =: \hat{N}_Y \Indep X \; .
\end{equation}

In other words, if we had a consistent estimator for $E[Y|X]$ and $\mathrm{Sd}(Y|X)$ the estimated residuals $\hat{N}_Y$ are independent of cause $X$. The implication of this result is that such an estimator would be applicable to most continuous noise distributions, since we only require that the first two moments of $N_Y$ exist.
In their paper, \citet{sunschulte} discuss the advantages of independence-based approaches in this context, but do not derive a general estimator for the above quantities.

Before introducing our proposed estimators, we briefly sketch the proof idea to provide more intuition to the result in Eq.~\eqref{eq:estimating-ny-scaled}. We first derive that
\begin{align*}
    \E[Y|X] &= f(X) + g(X) \E[N_Y], \\
    \mathrm{Sd}(Y|X) &= g(X) \mathrm{Sd}(N_Y) \; .
\end{align*}
Note that an estimator that consistently learns $\E[Y|X]$ will not coincide with $f(X)$ if $\E[N_Y] \neq 0$. Similarly, $\mathrm{Sd}(Y|X)$ may differ from $g(X)$ whenever $\mathrm{Sd}(N_Y) \neq 1$. Importantly, by estimating the noise variable through Eq.~\eqref{eq:estimating-ny-scaled}, all terms that depend on $X$ cancel and we are guaranteed to learn the standardized version of $N_Y$ in the large sample limit~\citep{sunschulte}, i.e.,
\[
    \frac{Y - \E[Y|X]}{\mathrm{Sd}(Y|X)} = \frac{g(X) (N_Y - \E[N_Y])}{g(X) \mathrm{Sd}(N_Y)} = \frac{N_Y - \E[N_Y])}{\mathrm{Sd}(N_Y)} \; ,
\]
which we will exploit in our approach. Alternatively, Lemma 4.1 by~\citet{sunschulte} clarifies this result by showing that for any LSNM, as defined in Eq.~\eqref{eq:Def1}, there exists a likelihood-equivalent LSNM \begin{align*}
    Y = \mu(X) + \sigma(X) \tilde{N}_Y, \quad \tilde{N}_Y=\frac{N_Y-\E[N_Y]}{\text{Sd}(N_Y)} \Indep X,
\end{align*}
under which $\mu(X) = \E[Y|X]$ and $\sigma(X) = \mathrm{Sd}(Y|X)$. We further note, that these results directly translate to the multivariate setting, where $X$ is replaced by $\mX$.

\subsection{Multivariate Causal Discovery}

To infer DAGs from an iid sample of a multivariate random vector $\mX$, we adapt the RESIT~\citep{RESIT} algorithm to LSNMs as shown in Algorithm~\ref{alg:resti-ls}. We call the resulting algorithm \ourmethod (short for Location-Scale RESIT). Similar adaptations to LSNMs have been explored by~\citet{Expfam,khan:26:beyond}.

In summary, akin to the original RESIT method, Algorithm~\ref{alg:resti-ls} consists of two phases. Phase~1 learns a causal ordering $\pi$ in which the parent set for each node $k$ consists of all nodes that appear prior to $k$ in the causal ordering. The causal ordering is defined as a bijective mapping \(\pi: \{ 1, \dots, d \} \to \{ 1, \dots, d \}\) satisfying
\[\pi(i) < \pi(j) \text{ if } j \in \mathrm{De}_{X^{(i)}}^{\mathcal G},\]
where $\mathrm{De}_{X^{(i)}}^{\mathcal G}$ refers to the descendants of a node $i$. We start with an empty ordering $\pi$ and a candidate set $\gS = \{ 1, \dots, d \}$. Then, we iteratively identify sink nodes, which we add to $\pi$ and remove from $\gS$ until $\gS$ is empty. At each step, we estimate the standardized residuals for each node $k \in \gS$ by obtaining the conditional mean and standard deviation of $X^{(k)} \mid \{ X^{(i)} \}_{i \in S \backslash \{ k\}}$ based on one of the estimators described in Section~\ref{sec:estimation}. Note that both the mean and the standard deviation can be non-constant and are modeled through $\{ X^{(i)} \}_{i \in S \backslash \{ k\}}$. We omit this dependence in the notation of Algorithm~\ref{alg:resti-ls} to avoid clutter. The sink node is then determined as the node for which the estimated $p$-value between standardized residuals and corresponding parent set $\{ X^{(i)} \}_{i \in S \backslash \{ k\}}$ is largest.

\begin{algorithm}[tb]
    \caption{\ourmethod}
    \label{alg:resti-ls}
    \begin{algorithmic}[1]
    \STATE \textbf{Input:} iid sample of $d$-dimensional $\mX$, \\ \phantom{\textbf{Input:}} significance level $\alpha$
    \STATE \textcolor{gray}{\# Phase 1: Causal order search} 
    \STATE $\gS = \{1, \dots, d \}$, $\pi = [ \; ]$ 
     \WHILE{$|\gS| > 0$}
        \FOR{$k \in \gS$}
          \STATE  $\hat{\mu}_k, \hat{\sigma}_k = \mathrm{Estimator}(X^{(k)} \mid \{ X^{(i)} \}_{i \in \gS \backslash \{ k\}})$ 
         \STATE   $p_k = \mathrm{HSIC}\left(\{ X^{(i)} \}_{i \in \gS \backslash \{ k\}}, \frac{X^{(k)} - \hat{\mu}_k}{\hat{\sigma}_k} \right)$
        \ENDFOR
      \STATE  Let $p^*$ be the largest $p$-value. 
      \STATE  $\gS = \gS \backslash \{ k^* \}$ 
       \STATE $\mathrm{PA}(k^*) = \gS$ 
      \STATE  $\pi  = [k^*, \pi]$
    \ENDWHILE
    \STATE \textcolor{gray}{\# Phase 2: Pruning} \\
    \FOR{$k=2, \dots, d$}
        \FOR{$\ell \in \mathrm{PA}(\pi(k))$}
           \STATE $R =\mathrm{PA}(\pi(k)) \backslash \{ \ell \}$ 
           \STATE $\hat{\mu}, \hat{\sigma} = \mathrm{Estimator}(X^{(\pi(k))} \mid \{ X^{(i)} \}_{i \in R})$ 
         \STATE   $p = \mathrm{HSIC}\left(\{ X^{(i)} \}_{i \in \{ \pi(1), \dots, \pi(k-1) \}}, \frac{X^{(\pi(k))} - \hat{\mu}}{\hat{\sigma}} \right)$ 
            \IF{$p > \alpha$}
               \STATE Remove $\ell$ from $\mathrm{PA}(\pi(k))$ 
            \ENDIF
        \ENDFOR 
    \ENDFOR
    \STATE \textbf{Output:} $[\mathrm{PA}(\pi(1)), \dots, \mathrm{PA}(\pi(d))]$
    \end{algorithmic}
\end{algorithm}

Phase 2 of the algorithm prunes superfluous edges for each node $k$ by iteratively checking if removing a member of the current parent set will induce a dependence between the estimated standardized residuals and the nodes appearing before $k$ in the ordering $\pi$.

\paragraph{Consistency} The original RESIT algorithm is guaranteed to find the correct causal graph $\gG$ for identifiable ANMs under causal minimality, given an independence oracle and assuming that the noise can be consistently estimated.
In Algorithm~\ref{alg:resti-ls}, we only modified the procedure for estimating the noise terms and use the HSIC test. Thus, we need to assume that the data has been generated by an identifiable LSNM (cf.~Section~\ref{sec:lsnms-intro}) and we exploit the result that it is sufficient to recover the standardized noise variables, as can be achieved by the multivariate extension to Eq.~\eqref{eq:estimating-ny-scaled}. Further, note that consistent estimation with HSIC can also be achieved via sample splitting if the estimator to extract the noise term is weakly consistent, or without sample splitting if a strongly consistent estimator is available~\citep[Thm.~20]{mooij:16:pairs}.

\section{Estimating Methods}
\label{sec:estimation} 
In this section, we describe three approaches for estimating the conditional mean and scale that we employ within \ourmethod to extract residuals in LSNMs. We propose two methods that estimate the scale from a pretrained in-context quantile regression model (Section~\ref{sec:QR}). Specifically, we use TabPFN~\citep{hollmann:23:tabpfn}, a tabular foundation model, to obtain estimates of the conditional mean and conditional quantile function. The first two approaches then recover the scale from the estimated conditional quantile function in different ways.
The third approach (Section~\ref{sec:FGLS}) jointly estimates the mean and variance functions via feasible generalized least squares, which is consistent under suitable modeling assumptions.

\subsection{Pretrained In-Context Quantile Regression} \label{sec:QR}
Following the argumentation in Section \ref{subsec:residual-indep}, we assume without loss of generality that the LSNM is given by \begin{align} Y=\mu(\bm{X}) + \sigma(\bm{X}) N_Y, \quad N_Y \Indep X, \label{LSNM01} \end{align} where $\mu(\bm{X})=\E[Y|\bm{X}]$ and $\sigma(\bm{X})=\sqrt{\text{Var}(Y|\bm{X})}$. We obtain an estimate of $\mu$ directly from TabPFN and recover $\sigma$ from the quantile function of $Y|\mX = \bm{x}$. For a random variable $Z$ with cumulative distribution function (CDF) $F_Z(z) = \text{P}(Z\leq z)$, the quantile function is given by $$Q_Z(\tau) = \inf\{ z: F_Z(z) \geq \tau\}, \quad \tau \in (0,1).$$
Under the assumption of symmetric noise, mean and median of $Y|\mX=\bm{x}$ coincide, in which case the residuals can be extracted solely based on the quantile function when estimating the scale based on $Q_{Y|\mX=\bm{x}}$. 

\subsubsection{Quantile-Difference Scale Estimation} \label{sec:QR-QD}
As shown in Proposition \ref{prop:sigma}, knowledge of two quantiles of $N_Y$ and the quantile function $Q_{Y|\mX=\bm{x}}$ allows us to recover $\sigma(\bm{x})$ exactly. More importantly, $\sigma(\bm{x})$ can be recovered up to scale without knowledge of the quantiles of $N_Y$. This is sufficient to recover the residuals up to scale within \ourmethod under the LSNM assumption, as for any constant $c \neq 0$ we have that $$(Y-\mu(\mX))/(c \sigma(\bm{x})) \Indep \bm{X}.$$ In the implementation, we calculate the quantiles of $N_Y$ under the assumption that $N_Y \thicksim \mathcal{N}(0,1)$. This choice, however, does not require the true distribution of $N_Y$ to be Gaussian, as any choice of $Q_{N_Y}(\tau_1)-Q_{N_Y}(\tau_2)$ only fixes the otherwise irrelevant multiplicative constant.
\begin{proposition} \label{prop:sigma}
Let $\tau_1, \tau_2 \in (0,1)$. Under the LSNM in Equation \ref{LSNM01}, we have that
\begin{align*}
    \sigma(\bm{x}) &= \dfrac{ Q_{Y|\mX=\bm{x}}(\tau_1) -  Q_{Y|\mX=\bm{x}}(\tau_2)}{Q_{N_Y} (\tau_1) - Q_{N_Y}(\tau_2)} \\ &\propto Q_{Y|\mX=\bm{x}}(\tau_1) -  Q_{Y|\mX=\bm{x}}(\tau_2).
\end{align*}
\end{proposition}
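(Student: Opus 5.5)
The plan is to first describe the conditional distribution of $Y$ given $\mX=\bm{x}$ explicitly, then read off its quantile function and take differences.

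Fix $\bm{x}$. Since $N_Y \Indep \bm{X}$ under the LSNM in Equation~\ref{LSNM01}, the conditional law of $Y$ given $\mX=\bm{x}$ is the law of $\mu(\bm{x}) + \sigma(\bm{x}) N_Y$. This is an affine transformation of $N_Y$ with strictly positive slope, because $\sigma(\bm{x}) = \mathrm{Sd}(Y\mid\mX=\bm{x}) = g(\bm{x})\,\mathrm{Sd}(N_Y) > 0$ whenever the noise is nondegenerate. Hence
\[
F_{Y|\mX=\bm{x}}(y) = F_{N_Y}\bigl((y-\mu(\bm{x}))/\sigma(\bm{x})\bigr).
\]

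Next I show quantile equivariance under increasing affine maps. For $a\in\mathbb{R}$, $b>0$ and $Z = a + bN$, we have $F_Z(z)\ge\tau$ iff $F_N((z-a)/b)\ge\tau$. The set $\{z : F_Z(z)\ge\tau\}$ is therefore the image of $\{n : F_N(n)\ge\tau\}$ under the increasing bijection $n\mapsto a+bn$, and infima are preserved. This gives
\[
Q_{Y|\mX=\bm{x}}(\tau) = \mu(\bm{x}) + \sigma(\bm{x})\,Q_{N_Y}(\tau)
\]
for every $\tau\in(0,1)$. The argument only uses the generalized-inverse definition, so it holds without continuity or strict monotonicity of $F_{N_Y}$.

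Subtracting this identity at $\tau_1$ and $\tau_2$ cancels the location term:
\[
Q_{Y|\mX=\bm{x}}(\tau_1)-Q_{Y|\mX=\bm{x}}(\tau_2) = \sigma(\bm{x})\bigl(Q_{N_Y}(\tau_1)-Q_{N_Y}(\tau_2)\bigr).
\]
Dividing by the bracket gives the first equality of Proposition~\ref{prop:sigma}. The proportionality statement follows because the constant $Q_{N_Y}(\tau_1)-Q_{N_Y}(\tau_2)$ does not depend on $\bm{x}$, which is exactly what the residual-up-to-scale argument preceding the proposition needs.

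The only delicate point is that this denominator must be nonzero. That requires $\tau_1\neq\tau_2$ and that $Q_{N_Y}$ is not flat between them, for instance when $F_{N_Y}$ is continuous and strictly increasing, or when $N_Y$ has a positive density near those quantiles. I would state this as an implicit condition of the proposition. Otherwise the proof is a routine one-line computation.
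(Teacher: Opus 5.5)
Your proposal is correct and follows the paper's proof essentially step for step. Both derive $Q_{Y|\mX=\bm{x}}(\tau)=\mu(\bm{x})+\sigma(\bm{x})Q_{N_Y}(\tau)$ directly from the generalized-inverse definition, using the independence of $N_Y$ and $\bm{X}$ and the positivity of $\sigma(\bm{x})$, and then subtract at $\tau_1\neq\tau_2$ and divide. Your caveat that the denominator $Q_{N_Y}(\tau_1)-Q_{N_Y}(\tau_2)$ must be nonzero is a condition the paper leaves implicit; note that $Q_{N_Y}$ can only be flat between $\tau_1$ and $\tau_2$ across an atom of $N_Y$ of mass at least $|\tau_2-\tau_1|$, so continuity of $F_{N_Y}$ alone already suffices.
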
 

 \begin{proof}[\textbf{Proof}]
    Let $\tau \in (0,1)$. We begin by showing that \begin{align*}
        Q_{Y|\mX=\bm{x}} (\tau) &= \inf \{ y: F_{Y|\mX=\bm{x}}(y) \geq \tau \} \\
         &=\inf\{ y: \text{P}(Y \leq y |\mX=\bm{x} ) \geq \tau \} \\
        &=\inf\{ y: \text{P}(\mu(\bm{x}) + \sigma(\bm{x}) N_Y \leq y) \geq \tau \} \\
         &=\inf\left\{ y: \text{P}\left(N_Y \leq \frac{y-\mu(\bm{x})}{\sigma(\bm{x})}\right) \geq \tau \right\} \\
         &=\mu(\bm{x}) + \sigma(\bm{x})\inf\left\{n_y: \text{P}(N_Y \leq n_y) \geq \tau \right\} \\
         &=\mu(\bm{x}) + \sigma(\bm{x}) Q_{N_Y}(\tau).
    \end{align*}
    It follows that for any $\tau_1 \neq \tau_2 \in(0,1)$ \begin{align*}
      & Q_{Y|\mX=\bm{x}} (\tau_1) - Q_{Y|\mX=\bm{x}} (\tau_2) \\
     = \quad & \sigma(\bm{x})  Q_{N_Y}(\tau_1) - \sigma(\bm{x})  Q_{N_Y}(\tau_2) \\
          \iff & \sigma(\bm{x}) = \dfrac{Q_{Y|\mX=\bm{x}} (\tau_1) - Q_{Y|\mX=\bm{x}} (\tau_2)}{Q_{N_Y}(\tau_1) -   Q_{N_Y}(\tau_2)}.
    \end{align*}
\end{proof} 
For numerical stability, we use well-separated quantiles, $(\tau_1, \tau_2)=(0.05, 0.95)$, to avoid a conditional quantile difference close to zero. In Appendix \ref{app:ablation-bivariate}, we compare the default setting with $(\tau_1,\tau_2)=(0.25,0.75)$ in the bivariate setting and obtain comparable performance in the downstream causal discovery task.

\subsubsection{Gauss-Legendre Scale Estimation} \label{sec:QR-GL}
Unlike the previous approach, this method uses both the estimated conditional mean and the conditional quantile function to recover $\sigma$. Hence, a drawback of this method is that errors in the estimated conditional mean directly propagate to the variance estimate.

Given a fixed $\mX=\bm{x}$ we have that
\begin{align*}
\text{Var}(Y|\mX=\bm{x})
&= \mathbb{E}\!\left[\left(Y-\mathbb{E}[Y|\mX=\bm{x}]\right)^2 |\mX=\bm{x}\right] \\
&= \int_{\mathcal Y}
\left(y-\mathbb{E}[Y|\mX=\bm{x}]\right)^2
\,dF_{Y|\mX=\bm{x}}(y).
\end{align*} 
Using the well-known quantile representation where for $U\thicksim  \text{Unif}(0,1)$ we get
\(
Y\overset{d}{=} Q_{Y |\mX=\bm{x}}(U),
\)
the conditional variance can also be expressed as
$$
\text{Var}(Y|\mX=\bm{x})
\!=
\int_{0}^1 \!\!
\left(
Q_{Y|\mX=\bm{x}}(u)
-\mathbb{E}[Y|\mX=\bm{x}]
\right)^2 \!du.
$$ Thus, $\sigma(\bf{x})=\sqrt{\text{Var}(Y|\mX=\bm{x})
}$ can be recovered from the conditional quantile function and conditional mean. We approximate the integral using the Gauss-Legendre quadrature~\citep{golub:1969:calculation}, evaluating the estimated quantile function at $m$ nodes. The approximation is exact for polynomials of degree at most $2m-1$ given that the quantiles are correctly estimated. Per default, we set $m=10$. We further evaluate setting $m=100$ in Appendix \ref{app:ablation-bivariate} to assess the sensitivity with respect to the number of knots.

\subsubsection{Suitability of TabPFN} \label{sec:tabpfn-suitability}

To estimate the quantile levels in practice, we perform in-context quantile regression with TabPFN~\citep{hollmann:23:tabpfn}. It is a tabular foundation model pretrained on a large corpus of tabular prediction tasks including classification, regression, and time-series forecasting. Further, it has been adapted to a range of tasks including causal inference~\citep{robertson:26:do-pfn}. We use the publicly available TabPFN-3 model~\citep{grinsztajn:26:tabpfn-v3} which outperforms classical approaches to quantile regression such as quantile XGBoost~\citep{chen:16:xgboost} and quantile regression forests~\citep{meinshausen:06:quantile-reg-forests} on a tailored quantile regression variant of TabArena~\citep{erickson:26:tabarena}. TabPFN-3 decodes quantiles at inference time by inverting the predicted CDF. We use the model as a plug-in estimator without further fine-tuning to obtain the conditional mean and a prespecified set of conditional quantile levels. 

In particular, let $f_\theta$ denote the pretrained TabPFN model and let the context dataset consist of the given iid sample of size $n$, i.e., $\mathcal{D} = \{(\bm{x}_{i}, y_{i})\}_{i=1}^n$. Then we estimate the conditional $\tau$-quantile for the $i$-th datapoint as the in-context prediction at the corresponding training point
\[
\hat{y}_i^{(\tau)} = f_\theta(\mathcal{D}, \bm{x}_i, \tau).
\]
Note that we are only interested in estimating the quantiles within the provided context and thus get the quantiles of the training (context) points. For future work, one could also consider the effect of sample splitting to estimate these quantiles.

While our experiments are based on TabPFN-3, future or commercial models could be used, as well. We also evaluated TabPFN-3.5~\citep{jager:26:tabpfn-v35} which performed, however, subpar to version 3 (cf.~Appendix~\ref{app:ablation-v35}). Further details regarding inference and used model checkpoints are provided in Appendix~\ref{app:hyperparams-loscar}.

\subsection{Consistent Estimation via Feasible GLS} \label{sec:FGLS}
In this section, we discuss the estimation of mean and variance using a spline-based feasible generalized least squares (FGLS) procedure. We detail estimation based on $n$ iid samples $\{(\bm{x}_{i}, y_{i})\}_{i=1}^n$ that are assumed to be generated by the LSNM
\begin{align} \label{eq:fgls-mod}
Y_i=B_1(\mX_i)^\top\bm{\beta} + \exp\left\{\!\frac{\left(\gamma_0+B_2(\mX_i)^\top\bm{\gamma}\right)}{2} \!\right\}N_{Y_i},    
\end{align}
where $N_{Y_i} \Indep \mX_i ,\, \mathbb E[N_{Y_i}]=0$ and $ \mathbb E[N_{Y_i}^2]=1$.
Here, $B_1(\mX_i)$ and $B_2(\mX_i)$ are spline basis functions with fixed dimensions $k_1, k_2$, respectively, that increase model flexibility compared to when using $\mX$ in their place. Unlike $B_1$, we take $B_2$ to exclude an intercept to denote $\gamma_0 \in \mathbb{R}$ separately. 
The conditional mean and variance are given by
\begin{align}
    \mu(\mX_i) &= \E[Y_i | \mX_i]=B_1(\mX_i)^\top\bm{\beta}, \label{eq:fgls-mu} \\ 
   \sigma^2(\mX_i)&= \text{Var}(Y_i| \mX_i)=\exp\left(
 \gamma_0+B_2(\mX_i)^\top\bm{\gamma}\right). \label{eq:fgls-var}
\end{align}

Let $U_i:=\sigma(\mX_i)N_{Y_i}$, $\bm{\Omega}(\bm{\gamma}):=\text{diag}(\exp\{B_2(\mX_i)^\top \bm{\gamma}\})$. The FGLS procedure which produces consistent estimators~\citep[Ch.~8.4]{wooldridge2013introductory} $\widehat{\bm{\beta}}_{\text{FGLS}}$ and $\widehat{\bm{\gamma}}$ under the model in Eq.~\ref{eq:fgls-mod} is given as follows. \begin{enumerate} \setlength{\itemsep}{0.2pt}
    \item Obtain the ordinary least squares (OLS) estimator $\widehat{\bm{\beta}}_{\text{OLS}}$ from regressing $Y_i = B_1(\mX_i)^\top \bm{\beta} + U_i$.
    \item Set $\widehat{U}_i=Y_i-B_1(\mX_i)^\top \widehat{\bm{\beta}}_{\text{OLS}}$.
    \item Let $g_0 \in \mathbb{R}$ and $E[\nu_i| \mX_i]=0$ and obtain $\widehat{\bm{\gamma}}$ via OLS from $\log(\widehat{U}_i^2) = g_0 + B_2(\bm{X}_i)^\top\bm{\gamma}+\nu_i$.
    \item Let $\bm{B}_1$ be the matrix with $i$-th row $B_1(\mX_i)^\top$, $\bm{Y}=(Y_1,\ldots,Y_n)^\top$ and obtain $$\widehat{\bm{\beta}}_{\text{FGLS}} =\left(
\bm{B}_1^\top\,\widehat{\Omega}(\widehat{\bm{\gamma}})^{-1}\bm{B}_1
\right)^{-1}
\bm{B}_1^\top\,\widehat{\Omega}(\widehat{\bm{\gamma}})^{-1}\bm{Y}.$$ 
\end{enumerate}
The third step is motivated by the fact that
\[
\log(U_i^2)=\gamma_0+B_2(\mX_i)^\top\bm{\gamma}+\log(N_{Y_i}^2).
\] Let $r:=\E[\log(N_{Y_i}^2)]$, $\nu_i := \log(N_{Y_i}^2)- r$, $g_0 :=r+\gamma_0$. Then the exogeneity assumption $E[\nu_i| \mX_i]=0$ holds ensuring consistency of the corresponding OLS regression~\citep[Ch.~11]{greene:2000:econometric}. 
By plugging in $\widehat{\bm{\beta}}_{\text{FGLS}}$ and $\widehat{\bm{\gamma}}$ into Eq.~\ref{eq:fgls-mu} and Eq.~\ref{eq:fgls-var}, mean and variance are consistently estimated, with the latter only up to scale, since $\gamma_0$ is not recovered. It follows that the extracted residuals are consistent up to scale which is sufficient for \ourmethod (cf.~Section~\ref{subsec:residual-indep}). 

We note that for numerical stability, we use a small ridge penalty in the empirical implementation of each regression step to mitigate ill-conditioning of the empirical design matrices~\citep{hastie2020ridge}.

\section{Related Work} \label{sec:related-work}

\paragraph{DAG Identifiability} Identifiability conditions are well established for linear non-Gaussian models~\citep[LiNGAM]{LiNGAM}, causal additive models~\citep[CAM]{CAM}, (nonlinear) ANMs~\citep{hoyer:09:nonlinear}, post nonlinear additive noise models~\citep[PNL]{PNL}, and LSNMs~\citep{LOCI,strobl:22:grci}. Both PNL models and LSNMs include ANMs as a special case. We review LSNMs in Section~\ref{sec:lsnms-intro}.

\paragraph{Quantile-Based Causal Discovery} Methods that approach causal discovery through quantile regression are score-based. QCCD~\citep{tagasovska:20:bqcd} employs quantile regression to derive a minimum description length (MDL) based score in line with other approaches building upon MDL-based approximations~\citep{budhathoki:16:origo,marx:17:slope} of the algorithmic Markov condition~\citep{janzing:10:algomarkov}. Recent work derives a score based on quantile partial effects (QPE) to learn the causal order~\citep{chen:26:qpe}, but their estimation method relies on a range of sensitive hyperparameters such as the assumed QPE model or transformation functions. Here, we use in-context quantile regression for estimating conditional mean and standard deviation based on pretrained tabular foundation models which does not require further tuning.

\paragraph{Causal Discovery with LSNMs} Besides the work discussing identifiability of LSNMs, \citet{skewscore} derive a method for symmetric noise leveraging skewness for identification. The code is, however, not publicly available. Further, \textsc{HOST}~\citep{duong:2023:heteroscedastic} iteratively identifies source nodes assuming Gaussianity. \textsc{ROCHE}~\citep{ROCHE} accounts for heavy-tailed noise by modeling a $t$-distribution, and \textsc{SkewD}~\citep{klippert:26:skew} implicitly models skewness by assuming a skew-normal. We include the latter three as baselines. \textsc{ROCHE}, however, can only be applied to the bivariate setting.

\paragraph{Amortized Causal Discovery} Through simulating data from causal networks during training, amortized causal discovery~\citep{lorch:22:amortized-cd} frames causal discovery as a supervised learning task~\citep{helby:22:supervised-cd}. By embedding causal assumptions into data generation and building upon the predictive capabilities of transformer models, more recent approaches show promising results for DAG learning and data imputation~\citep{thompson:26:arrow,li:26:tabcausal,xu:26:in-context} and learning causal graphs in the presence of confounding and non-additive models such as \textsc{CDFM}~\citep{qiao:26:cdfm} and \textsc{FoundCause}~\citep{blobaum:26:foundcause}. Unlike these models, our method does not require extensive training but can leverage any pretrained tabular foundation model that can perform quantile regression. Further, \ourmethod directly exploits identification results for LSNMs. 

\section{Experiments} \label{sec:experiments}
 We evaluate \ourmethod,
 using quantile-difference scale estimation via TabPFN by default, alongside the Gauss-Legendre variant \textsc{\ourmethod-GL} and the FGLS variant \textsc{\ourmethod-FGLS} against state-of-the-art (SOTA) baselines in bivariate and multivariate settings. To avoid any biases due to scaling, we standardize all datasets. We compare accuracy and area under the decision rate curve (AUDRC) in the bivariate setting and F1 score, order errors, structural Hamming distance (SHD) and structural intervention distance (SID) in the multivariate setting. Further metric details and implementation details of the baselines including a runtime analysis are provided in Appendices~\ref{app:metrics} and \ref{app:baseline-hyperparameters}, respectively. Hyperparameters of \ourmethod are specified in Appendix~\ref{app:hyperparams-loscar}.

 \begin{table}[t]
    \caption{Averaged Results on Bivariate Benchmarks. Top two are in bold and the best result is underlined.} 
    \label{tab:summary-all}
    \begin{center}
        \begin{tabular}{l  r  r}
        \toprule
        Method & {Accuracy (\%)~$\uparrow$} & {AUDRC (\%)~$\uparrow$}\\
        \midrule
        \rowcolor{gray!20} \ourmethod & \underline{\textbf{90.04}} $\pm$ 10.84 & \textbf{94.55} $\pm$ 9.17 \, \\
        \rowcolor{gray!20} \textsc{\ourmethod-GL} & \textbf{89.81} $\pm$ 11.26 & 94.36 $\pm$ 9.81 \, \\
        \rowcolor{gray!20} \textsc{\ourmethod-FGLS} & 74.97 $\pm$ 19.49 & 74.57 $\pm$ 26.44\\
        \textsc{SkewD-LL} & 84.56 $\pm$ 19.95 & 89.58 $\pm$ 16.30\\
        \textsc{CAM} & 75.10 $\pm$ 23.17 & 73.90 $\pm$ 26.44\\
        \textsc{QCCD} & 67.64 $\pm$ 26.44 & 69.92 $\pm$ 33.23\\
        \textsc{ROCHE} & 89.52 $\pm$ 9.92 \, & \underline{\textbf{94.99}} $\pm$ 7.56 \, \\
        \textsc{LOCI-IT} & 84.54 $\pm$ 12.71 & 89.43 $\pm$ 13.63\\
        \midrule
        \textsc{FoundCause} & 86.55 $\pm$ 17.96 & 90.97 $\pm$ 14.76\\
        \textsc{CDFM} & 78.75 $\pm$ 15.21 & 87.53 $\pm$ 15.91\\
        \bottomrule
        \end{tabular}
    \end{center}
\end{table}

\subsection{Bivariate Experiments}
We compare against 
\textsc{LOCI-IT}, an independence-based method assuming Gaussianity, \textsc{ROCHE}, \textsc{QCCD} and the likelihood-based \textsc{SkewD-LL} (consistent with multivariate \textsc{SkewD}), as heteroscedastic baselines. 
We further include \textsc{CAM}~\citep{CAM} as an ANM baseline and \textsc{FoundCause} and \textsc{CDFM} as amortized causal discovery baselines.

Our evaluation comprises 19 established benchmark datasets: ANM, LSNM, and multiplicative noise datasets from \citet{tagasovska:20:bqcd}; the SIM datasets and Tübingen pairs~\citep{mooij:16:pairs}; the Dataverse datasets~\citep{guyon2019cause}; and skewed ANM and LSNM datasets from \citet{klippert:26:skew}. Table~\ref{tab:summary-all} reports average accuracy and AUDRC. Detailed dataset descriptions and benchmark-wise results are provided in Appendix~\ref{app:datasets} and \ref{app:bivtables}.

In terms of accuracy, \ourmethod achieves the best performance ($90.04\%$), closely followed by \textsc{\ourmethod-GL} ($89.81\%$). \textsc{\ourmethod-FGLS} trails substantially ($74.97\%$), outperforming only \textsc{QCCD}. This gap between the quantile and FGLS variants of \ourmethod is further illustrated in Figure~\ref{fig:radar-comparision-loscar-variants} showing the benchmark-wise accuracies. When investigating the individual benchmark results, the FGLS variant has a slight edge to the other variants on near Gaussian and low-noise setting. Among the baselines, \textsc{ROCHE} is most competitive ($89.52\%$), while the remaining methods reach at most $86.55\%$ (\textsc{FoundCause}). AUDRC yields a similar ranking, except that \textsc{ROCHE} achieves the highest score, surpassing \ourmethod by $0.44\%$.

We report amortized approaches separately, as both were trained on ANM and LSNM models. \textsc{FoundCause} also includes skewed noise and performs particularly well on such datasets (see Appendix~\ref{app:bivtables}).

\begin{table}[t]
    \setlength{\tabcolsep}{1.5mm}
    \caption{Results for Light Tunnel. Best amortized and non-amortized models marked (OE=Order Errors).} \label{tab:realworld}
    \begin{center}
        \begin{tabular}{lrrrr}
            \toprule
            Method & F1 $\uparrow$ & OE $\downarrow$ & SHD $\downarrow$ & SID $\downarrow$ \\
            \midrule
            \multicolumn{5}{l}{\textbf{Light Tunnel} ($n=10 \, 000$, $d=20$, $\textit{\#edges}=39$)} \\
            \midrule
            \rowcolor{gray!20} \ourmethod & \textbf{0.430} & \textbf{12} & \textbf{45} & \textbf{49} \\
            \rowcolor{gray!20} \textsc{\ourmethod-GL} & \textbf{0.430} & 13 & \textbf{45} & \textbf{49} \\ 
            \rowcolor{gray!20} \textsc{\ourmethod-FGLS} & 0.382 & 16 & 68 & 52 \\
            \textsc{HOST} & 0.000 & 39 & 89 & 181 \\
            \textsc{SkewD} & NA & NA & NA & NA \\
            CAM & 0.213 & 17 & 59 & 158 \\
            LiNGAM & 0.068 & 25 & 82 & 142 \\
            \textsc{NOTEARS-MLP} & 0.092 & 20 & 59 & 138 \\
            \textsc{RESIT-TabPFN} & 0.015 & 38 & 128 & 170 \\
            \midrule
            \textsc{FoundCause} & 0.687 & \textbf{0} & \textbf{21} & \textbf{23} \\
            \textsc{CDFM} & \textbf{0.727} & 1 & 18 & 26 \\
            \bottomrule
        \end{tabular}
    \end{center}
\end{table}

\subsection{Multivariate Experiments}
We evaluate the \ourmethod variants against SOTA baselines on synthetic and real-world multivariate datasets. We include \textsc{SkewD} and \textsc{HOST} as LSNM-specific baselines, \textsc{CAM}, LiNGAM~\citep{LiNGAM} and \textsc{NOTEARS-MLP}~\citep{zheng:20:notears-mlp} as ANM baselines. We additionally consider RESIT with TabPFN as regressor (\textsc{RESIT-TabPFN}). In Appendix~\ref{app:addexp}, we also provide results for RESIT with OLS which is outperformed by \textsc{RESIT-TabPFN}. \textsc{FoundCause} and \textsc{CDFM} are included as amortized baselines.
\begin{figure*}[ht]
    \centering
    \includegraphics[width=0.85\textwidth, trim=0 0.35cm 0 0.4cm, clip]{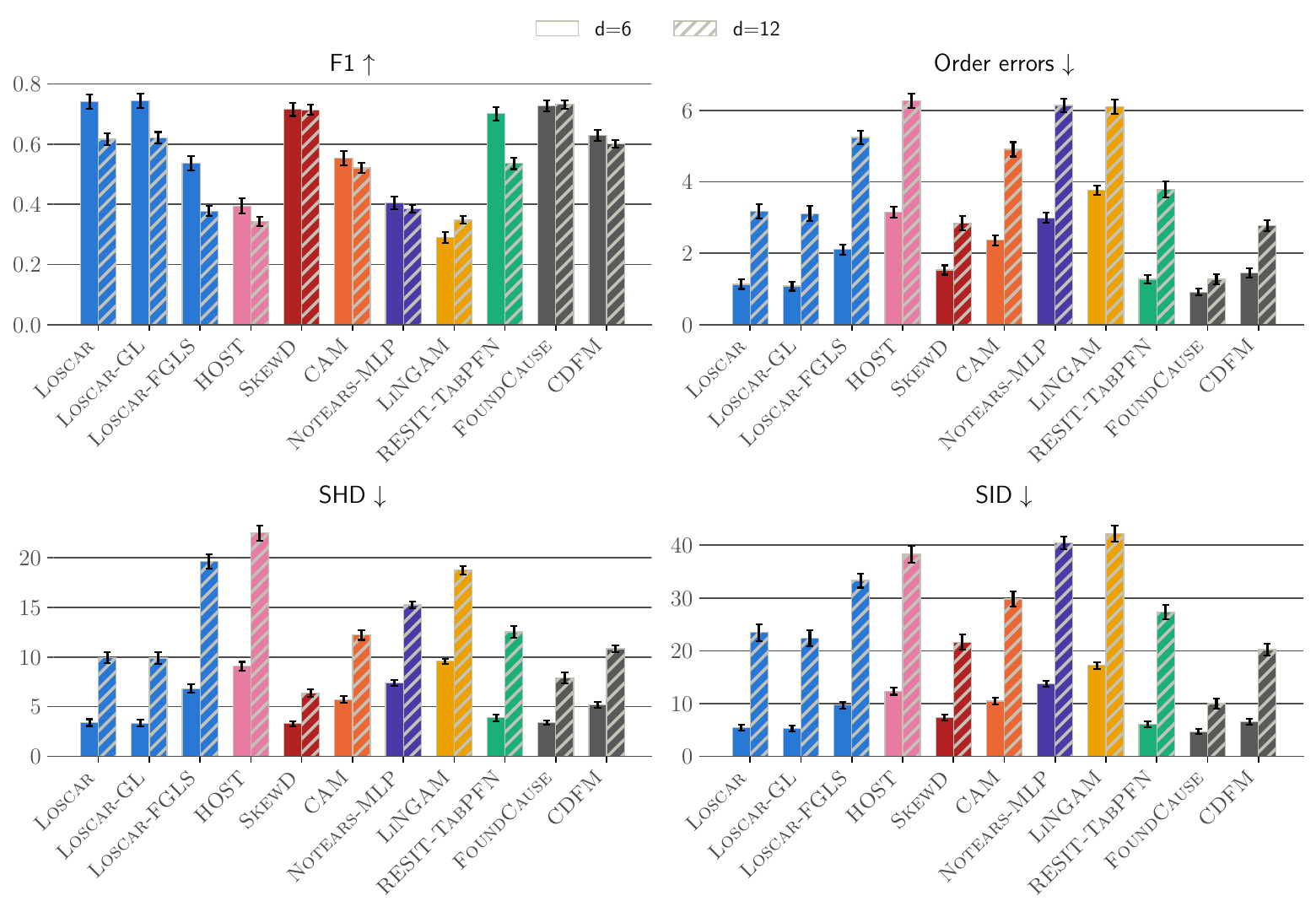}
    \caption{Comparison of \ourmethod and baselines in terms of F1, order errors, SHD and SID on LSNM data.}
    \label{fig:LSs_quad}
\end{figure*}

\paragraph{Synthetic Datasets} We generate Erd{\H{o}}s-R{\'e}nyi graphs~\citep{erdos:1960:evolution} with $d\in\{6,12\}$ nodes and $b=d$ edges. For each $d$, we generate 100 ANM and LSNM datasets each of size $n=1 \, 000$, with $f^{(j)}$ and $g^{(j)}$ given by sums of random sigmoidal functions over the parents of each variable $X^{(j)}$. The noise distribution of each $N_{X^{(j)}}$ is sampled uniformly from Gaussian, exponential, and skew-$t$. Further sampling details and results on homogeneous noise distributions are provided in Appendices~\ref{app:datasets} and \ref{app:homonoise}, respectively.

We focus on the LSNM results in Figure~\ref{fig:LSs_quad}, which show the mean metrics as bar plots including standard errors of the mean. Exact values and corresponding ANM results including precision and recall are reported in Appendices~\ref{app:MVtabMixLSNM} and \ref{app:mixANM}. For $d$ = 6, \ourmethod and \textsc{\ourmethod-GL} achieve the strongest performance among classical baselines in F1, order errors, and SID, while \textsc{SkewD} performs best in SHD. For $d$ = 12, \textsc{SkewD} is generally comparable or slightly better, most notably in F1. Among all methods, \textsc{FoundCause} performs best overall (particularly for $d{=}12$), while \ourmethod and \textsc{\ourmethod-GL} remain competitive. They outperform \textsc{FoundCause} in F1 for $d$ = 6 and, together with \textsc{SkewD}, in SHD for $d$ = 12. \textsc{\ourmethod-FGLS} is not competitive and falls short to \textsc{RESIT-TabPFN}. Overall, the quantile-based variants outperform the classical baselines except \textsc{SkewD}, with \textsc{FoundCause} achieving the strongest overall performance. Notably, in the corresponding ANM experiments (cf. Appendix~\ref{app:mixANM}), \textsc{\ourmethod-FGLS} outperforms the quantile-based variants, whereas \textsc{SkewD} and \textsc{CAM} are the strongest classical baselines.

\paragraph{Real-World Data} For real-world evaluation, we consider the Sachs~\citep{sachs:05:dataset} dataset and Light Tunnel dataset from the Causal Chambers repository~\citep{gamella:25:causal-chambers}, with all details in Appendix~\ref{app:datasets}. Results for Light Tunnel are shown in Table~\ref{tab:realworld}. \ourmethod and \textsc{\ourmethod-GL} are the strongest classical baselines, while the amortized methods substantially outperform all classical approaches across metrics. In particular, \textsc{FoundCause} and \textsc{CDFM} achieve 0 and 1 order errors, respectively, compared with 12 for the second-best being \ourmethod. \textsc{HOST} performs poorly, potentially due to the source nodes being approximately uniform, and \textsc{SkewD} did not terminate after more than 72 hours. Again, note that it is difficult to compare to amortized baselines on synthetic data or Light Tunnel, since the DGPs for these graphs are known. On Sachs most models perform comparably with the amortized baselines outperformed by the classical approaches (cf.~Appendix~\ref{app:sachs}). We treat this result with caution as the ground truth of this dataset has been challenged~\citep{mooij:20:jci}.

\section{Conclusion}
We introduced \ourmethod, a residual-based causal discovery method for location-scale noise models with nonparametric in-context quantile regression using tabular foundation models. We proposed two variants that differ in how the conditional scale is estimated for residual extraction and compared them with a consistent FGLS-based variant. Across our experiments, the quantile-based variants consistently outperform the FGLS variant and achieve state-of-the-art (SOTA) performance on established bivariate benchmarks, while remaining on par with SOTA baselines in low- to medium-dimensional settings.

\paragraph{Limitations and Future Work}
The scalability of \ourmethod is limited by the cost of repeated model fitting and the difficulty of conditional independence testing with large conditioning sets. Moreover, its greedy search may propagate errors across successive steps. To improve robustness to such errors, it would be interesting to extend \ourmethod with beam-search, as explored for RESIT by \citet{ong:2026:causal}, in future work.

\subsection*{Acknowledgements}

This work was partially supported by the Deutsche Forschungsgemeinschaft (DFG, German Research Foundation) -- 598263919. We acknowledge that Quang Nhat Huynh contributed to this work while employed at TU Dortmund University and the Research Center Trustworthy Data Science and Security.

In this work, we used generative AI tools for code prototyping of parts of the code (Claude Sonnet 5 High from Anthropic). Further, we used the ChatGPT 5.6 model from OpenAI for minor
language edits, aiming to enhance readability. LLM-generated code and text was verified, modified and tested for correctness by the authors. 
We take responsibility for the final content of this work, including text, claims, or artifacts produced with the aid of generative AI.

\bibliography{references}

\bibliographystyle{icml2026}

\newpage
\appendix
\onecolumn

The Appendix is structured as follows. Appendix~\ref{app:implemenation} contains all information regarding performance metrics, generated and used datasets and hyperparameter configurations of \ourmethod and the baselines we compare to. Then, Appendix~\ref{app:addexp} contains additional details to the experiments, as well as further ablation studies.

\section{Implementation Details} \label{app:implemenation}

\subsection{Metrics} \label{app:metrics}

We use the following metrics to compare our method to other baselines. 

\paragraph{Bivariate Experiments} For the bivariate experiments, we use \textbf{accuracy}, i.e., the number of correct decisions divided by the total number of pairs and the \textbf{area under the decision rate curve (AUDRC)} which measures how well the confidence of a method is aligned with its correct output~\citep{marx:19:sloper}. For \ourmethod, we compute its confidence as the absolute difference between the HSIC test statistics. To compute the confidence of \textsc{CDFM} and \textsc{FoundCause} we take the absolute difference in prediction probabilities between both directions; all remaining baselines come with their own confidence score. Given accuracy and confidence for each pair, the AUDRC is computed by first sorting all predictions by their confidence and then averaging the accuracies over the top $m$ predictions for $m=1,\ldots,M$, where $M$ is the number of pairs. Denote $\xi(i)$ as the index of the $i$-th most certain prediction, $f(\xi(i))$ the predicted direction and $t(\xi(i))$ the correct direction, then
\begin{equation*}
    \text{AUDRC} = \frac{1}{M} \sum\limits_{m=1}^M \frac{1}{m} \sum\limits_{i=1}^m  \mathds{1}[ f(\xi(i))=t(\xi(i))].
\end{equation*} 
The AUDRC is bounded by 0 and 1, with larger values indicating better performance. Averages over multiple benchmarks are weighted equally, independent of the number of pairs contributing to a benchmark.

For the T{\"u}bingen pairs~\citep{mooij:16:pairs}, we use the recommended weighting to report accuracy and AUDRC.

\paragraph{Multivariate Experiments} To benchmark our approach in the multivariate setting, we compute the following statistics with respect to the ground truth DAG:
\begin{itemize}
    \item \textbf{F1, Precision, Recall:} All metrics compare predicted to ground truth directed edges.
    \item \textbf{Order Errors:} Order errors are computed as suggested by \citet{rolland:22:score}, where given a causal ordering $\pi$ and a target adjacency matrix $A$ (representing a DAG) they define the topological order divergence, here order errors (OE), as
    \begin{equation*} \label{eq:order-errors}
        \mathrm{OE}(\pi, A) = \sum_{i=1}^d \sum_{j : \pi(i) > \pi(j)} A_{ij} \; .
    \end{equation*}
    Here, we have $A_{ij} \in \{ 0, 1 \}$ with $A_{ij}=1$ denoting an existing edge from $i$ to $j$. If $\pi$ is a correct ordering for $A$, $\mathrm{OE}(\pi, A) = 0$. Otherwise, it counts the number of edges that cannot be recovered due to the choice of the causal ordering. Since all methods 
    under comparison learn a DAG, we obtain a causal ordering $\pi$ from their predicted graphs via the function \texttt{topological\_sort} of the package \texttt{NetworkX}~\citep{SciPyProceedings_11}.
    \item \textbf{SHD:} The structural Hamming distance (SHD) measures the number of edges in the predicted adjacency matrix that are either missing or not present in the target graph. Reversed edges are counted as two errors. We use the implementation from the Causal Discovery Toolbox~\citep[CDT]{kalainathan:19:cdt}.
    \item \textbf{SID:} The structural intervention distance~\citep[SID]{peters:2015:structural} measures the closeness of two graphs in terms of their causal inference statements. We use the Python wrapper provided by CDT~\citep{kalainathan:19:cdt}.
\end{itemize}

\subsection{Datasets} \label{app:datasets}

\subsubsection{Generated Synthetic Datasets}
We generate Erd{\H{o}}s-R{\'e}nyi graphs~\citep{erdos:1960:evolution} with $d\in\{6,12\}$ nodes and $b=d$ edges to simulate synthetic data. For each configuration, we construct 100 ANM and LSNM datasets of $n=1\,000$ samples. The functions $f^{(j)}$ and $g^{(j)}$ are generated as sums of random sigmoidal functions of each parent variable of $X^{(j)}$. Following the construction of these sigmoidal functions from \citet{tagasovska:20:bqcd}, we refer to the two structural equation model classes as ANs and LSs, respectively.

For the noise variables $N_{X^{(j)}}$, we consider three distributions with distinct characteristics: symmetric Gaussian noise, positive-valued exponential noise, and unbounded skew-$t$ noise with heavy tails~\citep{skew-normal-book}. In the main experiments, the noise distribution is sampled uniformly from these three choices for each node $N_{X^{(j)}}$. We additionally evaluate each distribution separately and report the corresponding results in Appendix~\ref{app:homonoise}.

For each sample $i$ and node $X^{(j)}$, the noise is generated according to one of the following two-step procedures:
\begin{itemize}
    \item $\sigma_{ij} \thicksim \text{Unif}(1,\sqrt{2}), \quad N_{X^{(j)}_i} \thicksim \mathcal{N}(0,\sigma_{ij}^2)$,
    \item $\lambda_{ij} \thicksim \text{Unif}(1/\sqrt{2}, 1), \quad N_{X^{(j)}_i} \thicksim \text{Exp}(\lambda_{ij})$,
    \item $\omega_{ij} \thicksim \text{Unif}(0.91, 1.29), \quad N_{X^{(j)}_i} \thicksim \text{ST}(0,\omega^2,-2,4)$,
\end{itemize}
where $\text{ST}(\xi, \omega^2, \alpha,\nu)$ denotes the skew-$t$ distribution as defined in \citet{skew-normal-book}. These parameter ranges ensure that the variance of $N_{X^{(j)}_i}$ lies within the interval $(1,2)$ across all three distributions. For non-source nodes $j$, the noise is additionally scaled by $0.2$ in the ANs setting and by $0.35$ in the LSs setting.

\subsubsection{Bivariate Benchmark Datasets}

To assess our methods performance on the bivariate cause-effect discovery task, we evaluate it on benchmarks drawn from four data repositories:
\begin{itemize}
    \item The \textbf{SIM} and \textbf{T{\"u}bingen} datasets were introduced by \citet{mooij:16:pairs}. The \textbf{SIM} datasets each consist of 100 pairs with $n=1 \, 000$ each, with Gaussian-process based functions (\textbf{SIM}), a dataset with low-noise (\textbf{SIM-ln}), near Gaussian distributions (\textbf{SIM-G}) and additional confounding (\textbf{SIM-c}). From the \textbf{T{\"u}bingen} dataset, we extracted all univariate pairs and excluded pairs with binary variables, resulting in 99 pairs. We use the recommended weighting of the pairs to compute the accuracy and AUDRC on this dataset. Note that $n$ varies for this dataset between a few 100 datapoints and more than $30 \, 000$.
    \item \cite{goudet:18:learning} provide the \textbf{Cha} dataset (used in the cause-effect pair challenge), the \textbf{Net} dataset, which is generated from random neural networks, and the \textbf{Multi} dataset generated from polynomial mechanisms. Each dataset consists of 300 cause-effect pairs with $n=1 \, 000$.
    \item \citet{tagasovska:20:bqcd} introduce five synthetic benchmark datasets: two nonlinear additive noise datasets \textbf{AN} and \textbf{ANs}, where the latter uses sigmoidal functions to ensure that the function is invertible and noise distributions drawn from a Gaussian. Accordingly, \textbf{LS} and \textbf{LSs} provide location-scale models, and last \textbf{MNU} models multiplicative noise with uniform noise distributions. All datasets consist of 100 pairs each with $n=1 \, 000$.
    \item \citet{klippert:26:skew} extend the more challenging \textbf{ANs} and \textbf{LSs} datasets from \citet{tagasovska:20:bqcd} to include skewed noise variables. The ``\textbf{-med}'' and ``\textbf{-str}'' variants use noise variables drawn from the skew-normal and a generalized normal distribution with skewness $-0.455$ and $0.985$, respectively. Each of the datasets consists of $200$ cause-effect pairs, equally split between skew-normal and generalized normal. The ``\textbf{-ext}'' datasets consists of 100 pairs with the noise variable drawn from a generalized normal with skewness $1.750$. All pairs consist of $n=1 \, 000$ samples.
\end{itemize}

To avoid any biases due to scaling, we standardize cause and effect by default.

\subsection{Real-World Multivariate}

Our empirical evaluation uses two real-world datasets that we describe below.

\paragraph{Sachs} The Sachs dataset consists of single-cell flow-cytometry measurements of $d=11$ phosphoproteins and phospholipids in human immune cells under several perturbation conditions. The dataset is routinely used to benchmark causal discovery algorithms, whereas its ground truth network has been challenged by multiple authors~\citep{meinshausen:16:methods,mooij:20:jci}. In our comparison, we included the observational-only part of the dataset consisting of $n=853$ data points and $17$ edges commonly used for benchmarking~\citep{cundy:21:bcd-nets}.\!\footnote{\href{https://github.com/ermongroup/BCD-Nets}{https://github.com/ermongroup/BCD-Nets}}

\paragraph{Light Tunnel} The Light Tunnel dataset is part of the Causal Chambers repository~\citep{gamella:25:causal-chambers}. The dataset consists of measurements collected from a controlled physical light-tunnel system, where light-source and polarizer settings affect multiple sensor measurements. We use all $n= 10 \, 000$ observational samples from the uniform-reference condition, providing measurements of the system variables without targeted interventions.\!\footnote{\href{https://github.com/juangamella/causal-chamber/tree/main/datasets/lt\_interventions\_standard\_v1}{https://github.com/juangamella/causal-chamber/tree/main/datasets/lt\_interventions\_standard\_v1}} The ground-truth network consists of $d=20$ nodes and 39 edges.

\subsection{Hyperparameter Configuration \ourmethod} \label{app:hyperparams-loscar}

We implemented our method in Python, extending the implementation of RESIT provided through the \texttt{lingam} package~\citep{ikeuchi:23:python-lingam}\footnote{\href{https://github.com/cdt15/lingam}{https://github.com/cdt15/lingam}} which also provides an efficient implementation of the HSIC independence criterion. For all experiments, we fix the significance threshold $\alpha$ used in Phase 2 of \ourmethod to $0.01$. 

We use TabPFN-3~\citep{grinsztajn:26:tabpfn-v3} with model checkpoint \texttt{tabpfn-v3-regressor-v3\_default.ckpt} as default in our experiments. Conditional mean and quantile levels are obtained through a single fit and predict sequence where the fitting step provides the data as context to the model. Quantile levels are obtained dependent on the instantiation of \ourmethod. \ourmethod estimates the scale based on the quantile difference (cf.~Section~\ref{sec:QR-QD}) with $\tau_1 = 0.05$ and $\tau_2 = 1 - \tau_1$ using the quantiles of the standard normal distribution for the normalization term. \textsc{\ourmethod-GL} selects quantiles to compute the variance integral stated in Section~\ref{sec:QR-GL} using the Gauss-Legendre quadrature \citep{golub:1969:calculation} adjusted to the interval (0,1). This procedure deterministically selects $m$ quantiles at which the quantile function is evaluated to compute the integral. By default we set $m=10$. We ablate both parameter choices in Appendix~\ref{app:ablation-bivariate}. Further, for \textsc{RESIT-TabPFN} we only compute the conditional mean based on TabPFN in the RESIT algorithm.
For both \textsc{\ourmethod-FGLS} and \textsc{RESIT-OLS}, the mean parameters were estimated using ridge regression with a penalty of $\lambda=0.0001$. For each covariate, we used an independent cubic spline basis with 8 features and linear extrapolation beyond the boundary knots. For \textsc{\ourmethod-FGLS}, the variance parameters were estimated using ridge regression with $\lambda=0.01$, with an independent cubic spline basis of 6 features per covariate and the same linear extrapolation scheme.

\textbf{Compute and Runtime:} All experiments were performed single-threaded on an NVIDIA RTX PRO 6000 Blackwell Server Edition GPU node. Variants that did not use TabPFN were deployed on CPU (single-threaded, AMD EPYC 9655 Processor). Running a single pair of the bivariate benchmark took approximately 1 to 3 seconds. For the multivariate synthetic experiments, the runtime for $d=6$ and $d=12$ for a single DAG were $17$ seconds and $\sim 2$ minutes wall-clock time, respectively, for \ourmethod, whereas \textsc{\ourmethod-GL} was faster for $d=12$, with \textsc{\ourmethod-FGLS} being fastest, taking $\sim 3$ seconds for $d=12$. Details for the multivariate synthetic datasets are provided in Appendix~\ref{app:runtime}. As the real-world datasets contained more edges (especially Light Tunnel, which contained 20 nodes, 39 edges, and $10 \, 000$ data points) \ourmethod took longest to compute on these, taking $\sim 14$ minutes on Sachs and $~\sim 85$ minutes on Light Tunnel, respectively. \textsc{\ourmethod-GL} terminated after $\sim 110$ minutes on the Light Tunnel dataset.

\subsection{Baselines} \label{app:baseline-hyperparameters}

We used \textsc{QCCD}~\citep{tagasovska:20:bqcd}, \textsc{ROCHE}~\citep{ROCHE}, and \textsc{LOCI}~\citep{LOCI} with their default parameters as bivariate-only baselines, where for \textsc{LOCI} we selected the better-performing independence based variant. The default methods as published in the corresponding papers were used. Although \textsc{QCCD} is also implemented for the multivariate case, we conducted the multivariate analysis with the stronger-performing baselines. \textsc{CAM}~\citep{CAM} was run with default parameters form the CDT wrapper~\citep{kalainathan:19:cdt} and pruning enabled since we found that activating the pruning step led to significantly stronger performance. LiNGAM was applied with its default parameters as implemented in CDT. \textsc{SkewD} was applied with its default parameters, where we chose the likelihood-based variant for the bivariate comparison to be consistent with the multivariate extension that is also based on this implementation.  For the multivariate version we omitted the optimization step involving the expectation conditional maximization algorithm as advised by the authors to improve scalability. \textsc{NOTEARS-MLP}~\citep{zheng:20:notears-mlp} is applied with its recommended parameters $\lambda_1 = \lambda_2 = 0.01$.

\textsc{FoundCause}~\citep{blobaum:26:foundcause} we use the latest version, v1.0 of \textsc{FoundCause}\footnote{\href{https://github.com/amazon-science/foundcause}{https://github.com/amazon-science/foundcause}} with its default parameters. To force decisions for pairwise causal discovery, we directly compare prediction probabilities for both possible edges $X \to Y$ and $Y \to X$ and use their absolute difference as a score. For \textsc{CDFM}~\citep{qiao:26:cdfm} we apply the same rule to force decisions between pairs and otherwise use the default parameters.\!\footnote{Commit \texttt{9968160}, \href{https://github.com/DMIRLAB-Group/CDFM}{https://github.com/DMIRLAB-Group/CDFM}}

\textbf{Compute and Runtime:} \textsc{FoundCause}, \textsc{CDFM}, and \textsc{HOST} have been evaluated on an NVIDIA RTX PRO 6000 Blackwell Server Edition GPU node. All remaining baselines were executed single-threaded on CPU (AMD EPYC 9655 Processor). We note that there was no computational bottleneck, except for \textsc{SkewD}, for which we terminated the experiment on the Light Tunnel dataset after more than 3 days (2 repeated attempts), however, no GPU compliant version of the method is available. Wall-clock runtimes for each method on the synthetic multivariate experiments are reported in Table~\ref{tab:loscar_runtime} and further visualized in Figure~\ref{fig:runtime}.

\section{Additional Experiments} \label{app:addexp}

\subsection{Additional Bivariate Experiments and Results} \label{app:exp-biv}

\subsubsection{Detailed Bivariate Experimental Results} \label{app:bivtables}
We report the benchmark-wise accuracy and AUDRC obtained by each method in the bivariate setting in Tables~\ref{tab:app-bench-acc} and \ref{tab:bench-audrc}. The quantile-based variants of \ourmethod perform well across all datasets, whereas \textsc{\ourmethod-FGLS} performs well for additive noise but has substantial performance drops in the LSNM case, indicating inappropriate variance estimates. 
Interestingly, increasing the skewness in the LSNM datasets (LSs-med, LSs-str, LSs-ext) leads to a strong increase of performance for \textsc{\ourmethod-FGLS} (37.5\%, 58.0\%, 72.0\%). This behavior is in contrast to what is observed in most other methods. For \textsc{CAM}, \textsc{QCCD} and \textsc{CDFM} performance substantially degrades with increasing skewness, whereas other methods remain stable.

Among the amortized baselines, there are substantial performance differences between \textsc{FoundCause} and \textsc{CDFM} in certain settings. While \textsc{FoundCause} achieves almost perfect accuracy for multiplicative uniform noise (MNU), it seems that \textsc{CDFM} resorts to random guessing in that case. In contrast, when there is integrated confounding (SIMc), \textsc{CDFM} achieves 68\% accuracy, whereas \textsc{FoundCause} achieves 47\%. This suggests certain differences in the training datasets used to obtain these causal foundation models.

\begin{table*}[h!]
    \setlength{\tabcolsep}{1.5mm}
    \caption{Accuracy in \% over All Bivariate Benchmark Datasets. Baselines are split into conventional and amortized. The best result per benchmark is bolded.} \label{tab:app-bench-acc}
    \begin{center}
        \resizebox{\textwidth}{!}{%
        \begin{tabular}{lrrrrrrrrrrrrrrrrrrr}
        \toprule
         & \multicolumn{5}{c}{\citet{tagasovska:20:bqcd}} & \multicolumn{5}{c}{\citet{mooij:16:pairs}} & \multicolumn{3}{c}{\citet{guyon2019cause}} & \multicolumn{6}{c}{\citet{klippert:26:skew}} \\
        \cmidrule(lr){2-6} \cmidrule(lr){7-11} \cmidrule(lr){12-14} \cmidrule(lr){15-20}
        Method & AN & ANs & LS & LSs & MNU & SIM & SIMc & SIMln & SIMG & T{\"u}bingen & Cha & Net & Multi & ANs-med & ANs-str & ANs-ext & LSs-med & LSs-str & LSs-ext \\
        \midrule
        \rowcolor{gray!20} \textsc{\ourmethod} & \textbf{100} & \textbf{100} & \textbf{100} & 99 & \textbf{100} & \textbf{79} & \textbf{82} & 72 & 76 & 71.69 & 75.67 & 85.67 & 85.67 & 99.0 & 98.0 & \textbf{100} & 96.5 & 97.5 & 93 \\
        \rowcolor{gray!20} \textsc{\ourmethod-GL} & \textbf{100} & \textbf{100} & 99 & \textbf{100} & \textbf{100} & 77 & 81 & 73 & 76 & 69.34 & 74.67 & 85.33 & 86.00 & 99.0 & 98.0 & \textbf{100} & 98.0 & 97.0 & 93 \\
        \rowcolor{gray!20} \textsc{\ourmethod-FGLS} & 95 & \textbf{100} & 54 & 38 & 50 & \textbf{79} & \textbf{82} & 81 & 82 & 69.78 & \textbf{76.67} & 81.67 & 77.33 & 97.5 & 94.0 & 99 & 37.5 & 58.0 & 72 \\
        \textsc{SkewD-LL} & \textbf{100} & \textbf{100} & \textbf{100} & \textbf{100} & 87 & 54 & 51 & 81 & 88 & 45.97 & 54.33 & 73.33 & 72.00 & \textbf{100.0} & \textbf{100.0} & \textbf{100} & \textbf{100.0} & \textbf{100.0} & \textbf{100} \\
        \textsc{CAM} & \textbf{100} & \textbf{100} & \textbf{100} & 53 & 86 & 57 & 60 & \textbf{87} & 81 & 57.77 & 46.67 & 78.33 & 34.67 & \textbf{100.0} & \textbf{100.0} & \textbf{100} & 82.5 & 70.0 & 33 \\
        \textsc{ROCHE} & \textbf{100} & \textbf{100} & \textbf{100} & \textbf{100} & \textbf{100} & 78 & \textbf{82} & 77 & 81 & \textbf{77.27} & 74.00 & \textbf{87.67} & 83.00 & 99.5 & 97.5 & 95 & 99.5 & 88.5 & 81 \\
        \textsc{QCCD} & \textbf{100} & 82 & \textbf{100} & 96 & 99 & 62 & 72 & 80 & 64 & 77.09 & 53.67 & 80.33 & 50.67 & 56.0 & 43.5 & 41 & 95.5 & 21.5 & 11 \\
        \textsc{LOCI-IT} & \textbf{100} & \textbf{100} & 94 & 89 & \textbf{100} & 78 & \textbf{82} & 73 & 78 & 60.84 & 72.67 & 86.67 & 79.00 & 99.5 & 99.5 & 98 & 72.0 & 69.0 & 75\\
        \midrule
        \textsc{FoundCause} & \textbf{100} & \textbf{100} & 98 & 94 & 99 & 54 & 47 & 71 & \textbf{93} & 60.95 & 67.00 & 73.67 & \textbf{94.33} & \textbf{100.0} & 99.5 & 93 & \textbf{100.0} & \textbf{100.0} & \textbf{100} \\
        \textsc{CDFM} & 92 & 86 & 92 & 88 & 46 & 52 & 68 & 71 & 88 & 62.30 & 60.33 & 76.67 & 75.00 & 96.5 & 96.5 & 85 & 96.0 & 84.0 & 81 \\
        \bottomrule
    \end{tabular}}
    \end{center}
\end{table*}

\begin{table*}[h!]
    \setlength{\tabcolsep}{1.5mm}
    \caption{AUDRC in \% over All Bivariate Benchmark Datasets. Baselines are split into conventional and amortized. The best result per benchmark is bolded.} \label{tab:bench-audrc}
    \begin{center}
        \resizebox{\textwidth}{!}{%
        \begin{tabular}{lrrrrrrrrrrrrrrrrrrr}
        \toprule
         & \multicolumn{5}{c}{\citet{tagasovska:20:bqcd}} & \multicolumn{5}{c}{\citet{mooij:16:pairs}} & \multicolumn{3}{c}{\citet{guyon2019cause}} & \multicolumn{6}{c}{\citet{klippert:26:skew}} \\
        \cmidrule(lr){2-6} \cmidrule(lr){7-11} \cmidrule(lr){12-14} \cmidrule(lr){15-20}
        Method & AN & ANs & LS & LSs & MNU & SIM & SIMc & SIMln & SIMG & T{\"u}bingen & Cha & Net & Multi & ANs-med & ANs-str & ANs-ext & LSs-med & LSs-str & LSs-ext \\
        \midrule
        \rowcolor{gray!20} \ourmethod & \textbf{100.00} & \textbf{100.00} & \textbf{100.00} & 99.96 & \textbf{100.00} & 87.81 & 93.72 & 87.60 & 94.09 & 70.36 & 71.83 & 96.79 & 95.69 & 99.93 & 99.94 & \textbf{100.00} & 99.79 & 99.85 & 99.09 \\
        \rowcolor{gray!20} \textsc{\ourmethod-GL} & \textbf{100.00} & \textbf{100.00} & 99.98 & \textbf{100.00} & \textbf{100.00} & 88.27 & 92.78 & 87.87 & 94.08 & 66.40 & 71.49 & 96.90 & 96.41 & 99.92 & 99.95 & \textbf{100.00} & 99.93 & 99.86 & 98.92 \\
        \rowcolor{gray!20} \textsc{\ourmethod-FGLS} & 98.32 & \textbf{100.00} & 52.71 & 22.87 & 66.92 & \textbf{88.77} & 92.71 & 84.55 & 93.25 & 62.29 & 76.12 & 92.16 & 77.50 & 99.09 & 97.94 & 99.96 & 16.80 & 40.00 & 54.93 \\
        \textsc{SkewD-LL} & \textbf{100.00} & \textbf{100.00} & \textbf{100.00} & \textbf{100.00} & 97.58 & 64.58 & 64.33 & \textbf{94.57} & 94.95 & 54.57 & 57.50 & 81.93 & 92.01 & \textbf{100.00} & \textbf{100.00} & \textbf{100.00} & \textbf{100.00} & \textbf{100.00} & \textbf{100.00} \\
        \textsc{CAM} & \textbf{100.00} & \textbf{100.00} & \textbf{100.00} & 36.29 & 76.45 & 67.76 & 68.82 & 87.20 & 88.24 & 69.37 & 41.05 & 84.88 & 40.02 & \textbf{100.00} & \textbf{100.00} & \textbf{100.00} & 74.08 & 59.61 & 10.24 \\
        \textsc{ROCHE} & \textbf{100.00} & \textbf{100.00} & \textbf{100.00} & \textbf{100.00} & \textbf{100.00} & 86.61 & \textbf{93.85} & 88.96 & 92.43 & 75.46 & \textbf{78.42} & \textbf{97.62} & 94.40 & \textbf{100.00} & 99.96 & 99.85 & \textbf{100.00} & 99.28 & 97.97 \\
        \textsc{QCCD} & \textbf{100.00} & 91.23 & \textbf{100.00} & 99.87 & 99.99 & 70.69 & 83.31 & 92.19 & 76.30 & \textbf{84.38} & 60.52 & 93.89 & 62.68 & 57.15 & 26.68 & 27.93 & 99.68 & 0.62 & 1.41 \\
        \textsc{LOCI-IT} & \textbf{100.00} & \textbf{100.00} & 99.41 & 97.42 & \textbf{100.00} & 88.74 & 93.16 & 85.97 & 93.24 & 59.62 & 71.33 & 96.97 & 77.76 & 99.99 & 99.99 & 99.89 & 87.28 & 56.98 & 91.39 \\
        \midrule
        \textsc{FoundCause} & \textbf{100.00} & \textbf{100.00} & 99.95 & 99.28 & 99.99 & 63.93 & 56.97 & 80.26 & \textbf{99.42} & 66.81 & 73.33 & 89.23 & \textbf{99.55} & \textbf{100.00} & \textbf{100.00} & 99.68 & \textbf{100.00} & \textbf{100.00} & \textbf{100.00} \\
        \textsc{CDFM} & 95.62 & 97.20 & 98.74 & 98.40 & 38.75 & 64.26 & 76.47 & 86.58 & 95.86 & 75.10 & 71.94 & 90.08 & 89.16 & 99.85 & 99.59 & 97.42 & 99.60 & 95.26 & 93.09 \\
        \bottomrule
        \end{tabular}}
    \end{center}
\end{table*}

\subsubsection{Ablation Studies Bivariate Benchmark} \label{app:ablation-bivariate}

In the following ablation study, we varied several aspects of \ourmethod. For the default variant which uses TabPFN to estimate the conditional mean and estimates the conditional scale based on the quantile difference (Section.~\ref{sec:QR-QD}) we ablate the parameter $\tau_1 \in \{0.05, 0.25 \}$ with $\tau_2=1-\tau_1$, where the default is $0.05$. Similarly, for the Gauss-Legendre variant, we ablate the number of quantiles $m$ that are used for computing the integral. We use $m \in \{ 10, 100 \}$. Further, we added two ANM-based baselines \textsc{RESIT-TabPFN} which implements RESIT using TabPFN as in-context regression method and \textsc{RESIT-OLS} which instead uses spline-based regression.

In Tables~\ref{tab:app-bench-acc-ablation} and \ref{tab:bench-audrc-ablation} we show the accuracy and the AUDRC for each variant on all bivariate benchmarks. We can see that \ourmethod is relatively robust to the parameter choice staying within a $2.5\%$ difference to the main variant for all benchmarks. For \textsc{\ourmethod-GL}, the results are similar, although performance drops of approximately $5\%$ can be observed for T{\"u}bingen and LSs-ext.

To further investigate this behavior, we plot the estimated means and standard deviations for $m=10$ and $m=100$ for Pair 1 of the LSs benchmark as an example in Figure~\ref{fig:ablation-GL10100}. In general, more nodes should improve the integral approximation via the Gauss-Legendre quadrature, as it is exact for polynomials of degree up to $2m-1$. However, note that we do not have access to the ground-truth quantile function, but only an estimate of it. Inaccurate estimates of the quantile function may impair the integral approximation. Figure~\ref{fig:ablation-GL10100} suggests that the quantile function near the quantile levels of 0 and 1 becomes more noisier, as indicated by the more rougher surface of the quantile function when $m=100$, in which case generally more knots are being placed some of which more closely to the limits. In contrast, the most outer quantile function used in the case $m=10$ is much smoother hinting that it might be well-recovered. Overall, the results indicate that a higher $m$ does not result in better performance given that there is no ground-truth quantile function available. Nonetheless, the sensitivity with respect to $m$ is rather small as indicated by the results in Tables~\ref{tab:app-bench-acc-ablation} and \ref{tab:bench-audrc-ablation}.

Overall, \textsc{\ourmethod-FGLS} is clearly outperformed by its TabPFN supported counterparts on data which contains flexible location-scale models, whereas its performance under high skewness improves. Therefore, its mean and variance estimates must be worse than in the TabPFN variants, possibly due to a lack of regularization resulting in too flexible function fits. The ANM baselines are comparable, however, as expected they are outperformed by the location-scale variants of \ourmethod.

\begin{table*}[h!]
    \setlength{\tabcolsep}{1.5mm}
    \caption{Accuracy in \% over All Bivariate Benchmark Datasets. The best result per benchmark is bolded. We include variations of our main approach, ablating $\tau_1$ from its default $0.05$ and $m$ from its default $10$. Further, \textsc{RESIT-TabPFN} is an ANM variant of \ourmethod performing in-context regression with TabPFN and last, \textsc{RESIT-OLS} is another ANM variant of our method leveraging standard spline-based regression.} \label{tab:app-bench-acc-ablation}
    \begin{center}
        \resizebox{\textwidth}{!}{%
        \begin{tabular}{lrrrrrrrrrrrrrrrrrrr}
        \toprule
         & \multicolumn{5}{c}{\citet{tagasovska:20:bqcd}} & \multicolumn{5}{c}{\citet{mooij:16:pairs}} & \multicolumn{3}{c}{\citet{guyon2019cause}} & \multicolumn{6}{c}{\citet{klippert:26:skew}} \\
        \cmidrule(lr){2-6} \cmidrule(lr){7-11} \cmidrule(lr){12-14} \cmidrule(lr){15-20}
        Method & AN & ANs & LS & LSs & MNU & SIM & SIMc & SIMln & SIMG & T{\"u}bingen & Cha & Net & Multi & ANs-med & ANs-str & ANs-ext & LSs-med & LSs-str & LSs-ext \\
        \midrule
        \textsc{\ourmethod} $(\tau=0.05)$ & \textbf{100} & \textbf{100} & \textbf{100} & 99 & \textbf{100} & 79 & \textbf{82} & 72 & 76 & \textbf{71.69} & 75.67 & 85.67 & \textbf{85.67} & 99.0 & 98.0 & \textbf{100} & 96.5 & \textbf{97.5} & \textbf{93} \\
        \ourmethod $(\tau=0.25)$ & \textbf{100} & \textbf{100} & \textbf{100} & 97 & \textbf{100} & 77 & \textbf{82} & 71 & 76 & 69.75 & 72.33 & 85.33 & \textbf{85.67} & 98.5 & 98.0 & \textbf{100} & 96.5 & 96.0 & 92 \\
        \textsc{\ourmethod-GL} ($m=10$) & \textbf{100} & \textbf{100} & 99 & \textbf{100} & \textbf{100} & 77 & 81 & 73 & 76 & 69.34 & 74.67 & 85.33 & 86.00 & 99.0 & 98.0 & \textbf{100} & 98.0 & 97.0 & \textbf{93} \\
        \textsc{\ourmethod-GL} ($m=100$) & \textbf{100} & \textbf{100} & 99 & \textbf{100} & \textbf{100} & \textbf{80} & \textbf{82} & 73 & 77 & 64.41 & 75.00 & \textbf{86.67} & 84.67 & 99.0 & 97.5 & \textbf{100} & \textbf{99.5} & 94.5 & 87 \\
        \textsc{\ourmethod-FGLS} & 95 & \textbf{100} & 54 & 38 & 50 & 79 & \textbf{82} & \textbf{81} & \textbf{82} & 69.78 & \textbf{76.67} & 81.67 & 77.33 & 97.5 & 94.0 & 99 & 37.5 & 58.0 & 72 \\
        \textsc{RESIT-TabPFN} & \textbf{100} & \textbf{100} & 63 & 12 & 0 & 75 & 80 & 74 & 76 & 60.48 & 73.67 & 78.00 & 38.00 & \textbf{100} & \textbf{100} & \textbf{100} & 28.0 & 49.5 & 44 \\
        \textsc{RESIT-OLS} & 98 & \textbf{100} & 65 & 12 & 2 & 74 & \textbf{82} & 78 & 77 & 60.01 & 73.33 & 78.00 & 37.00 & \textbf{100} & \textbf{100} & 99 & 31.5 & 50.5 & 43 \\
        \bottomrule
        \end{tabular}}
    \end{center}
\end{table*}
\begin{table*}[h!]
    \setlength{\tabcolsep}{1.5mm}
    \caption{AUDRC in \% over All Bivariate Benchmark Datasets. The best result per benchmark is bolded. We include variations of our main approach, ablating $\tau_1$ from its default $0.05$ and $m$ from its default $10$. Further, \textsc{RESIT-TabPFN} is an ANM variant of \ourmethod performing in-context regression with TabPFN and last, \textsc{RESIT-OLS} is another ANM variant of our method leveraging standard spline-based regression.} \label{tab:bench-audrc-ablation}
    \begin{center}
        \resizebox{\textwidth}{!}{%
        \begin{tabular}{lrrrrrrrrrrrrrrrrrrr}
        \toprule
         & \multicolumn{5}{c}{\citet{tagasovska:20:bqcd}} & \multicolumn{5}{c}{\citet{mooij:16:pairs}} & \multicolumn{3}{c}{\citet{guyon2019cause}} & \multicolumn{6}{c}{\citet{klippert:26:skew}} \\
        \cmidrule(lr){2-6} \cmidrule(lr){7-11} \cmidrule(lr){12-14} \cmidrule(lr){15-20}
        Method & AN & ANs & LS & LSs & MNU & SIM & SIMc & SIMln & SIMG & T{\"u}bingen & Cha & Net & Multi & ANs-med & ANs-str & ANs-ext & LSs-med & LSs-str & LSs-ext \\
        \midrule
        \textsc{\ourmethod} $(q=0.05)$ & \textbf{100.00} & \textbf{100.00} & \textbf{100.00} & 99.96 & \textbf{100.00} & 87.81 & \textbf{93.72} & 87.60 & \textbf{94.09} & \textbf{70.36} & 71.83 & 96.79 & 95.69 & 99.93 & 99.94 & \textbf{100.00} & 99.79 & 99.85 & \textbf{99.09} \\
        \textsc{\ourmethod} $(q=0.25)$ & \textbf{100.00} & \textbf{100.00} & \textbf{100.00} & 99.94 & \textbf{100.00} & 86.84 & 92.59 & 89.43 & 93.15 & 65.75 & 69.67 & 96.17 & 95.90 & 99.93 & 99.92 & \textbf{100.00} & 99.78 & 99.82 & 98.91 \\
        \textsc{\ourmethod--GL} ($m=10$) & \textbf{100.00} & \textbf{100.00} & 99.98 & \textbf{100.00} & \textbf{100.00} & 88.27 & 92.78 & 87.87 & 94.08 & 66.40 & 71.49 & 96.90 & \textbf{96.41} & 99.92 & 99.95 & \textbf{100.00} & 99.93 & \textbf{99.86} & 98.92 \\
        \textsc{\ourmethod-GL} ($m=100$) & \textbf{100.00} & \textbf{100.00} & 99.98 & \textbf{100.00} & \textbf{100.00} & \textbf{89.13} & 93.28 & 89.07 & 93.89 & 56.81 & 71.56 & \textbf{96.99} & 94.76 & 99.98 & 99.95 & \textbf{100.00} & \textbf{99.99} & 99.21 & 96.55 \\
        \textsc{\ourmethod-FGLS} & 98.32 & \textbf{100.00} & 52.71 & 22.87 & 66.92 & 88.77 & 92.71 & 84.55 & 93.25 & 62.29 & \textbf{76.12} & 92.16 & 77.50 & 99.09 & 97.94 & 99.96 & 16.80 & 40.00 & 54.93 \\
        \textsc{RESIT-TabPFN} & \textbf{100.00} & \textbf{100.00} & 70.87 & 2.63 & 0.00 & 84.47 & 90.25 & \textbf{89.74} & 90.15 & 53.74 & 70.58 & 91.72 & 20.88 & \textbf{100.00} & \textbf{100.00} & \textbf{100.00} & 10.94 & 23.20 & 16.54 \\
        \textsc{RESIT-OLS} & 99.93 & \textbf{100.00} & 77.16 & 2.78 & 0.11 & 87.56 & 91.50 & 87.29 & 90.59 & 54.00 & 72.90 & 90.87 & 22.54 & \textbf{100.00} & \textbf{100.00} & 99.99 & 13.28 & 29.47 & 18.01 \\
        \bottomrule
        \end{tabular}}
    \end{center}
\end{table*}
\begin{figure}[h!]
    \centering
    \includegraphics[width=\linewidth]{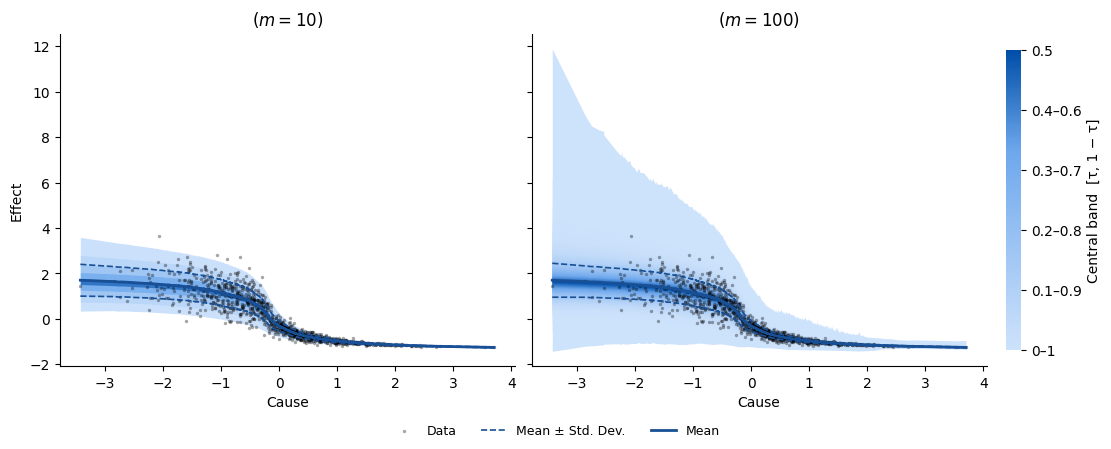}
    \caption{Shown are the estimates of the conditional mean and quantiles obtained from TabPFN-3 with $m \in \{10, 100\}$ and the corresponding scale computed as described in Section~\ref{sec:QR-GL} on Pair~1 of the LSs dataset~\citep{tagasovska:20:bqcd}. The outer contours of the shaded area correspond to the highest and lowest quantile considered by the corresponding instantiation. The estimates are used for \textsc{\ourmethod-GL}, where $m=10$ is the default used in our experiments.}
    \label{fig:ablation-GL10100}
\end{figure}

\subsubsection{Remarks to TabPFN Version 3.5} \label{app:ablation-v35}

In this article, we use TabPFN-3~\citep{grinsztajn:26:tabpfn-v3}, while in theory, our approach is agnostic to the model version of TabPFN as long as this version supports mean and quantile regression. During the time of writing, version 3.5 of TabPFN has been released~\citep{jager:26:tabpfn-v35}. Therefore, we ran a preliminary ablation study to compare both versions, where for v3.5, we use the model checkpoint \texttt{tabpfn-v3.5-20260909.safetensors}, Release v9.0.0. 
We found that in our framework, TabPFN-3 performed superior to TabPFN-3.5 and traced the performance difference to the fact that TabPFN-3 returned smoother estimates of conditional mean and estimated quantiles. We illustrate this observation Figure~\ref{fig:ablation-pfn35}, where we plot the estimated conditional mean, the default quantiles for \ourmethod as well as additional quantile ranges for Pair~1 of the LSs dataset~\citep{tagasovska:20:bqcd}. 
\begin{figure}[h!]
    \centering
    \includegraphics[width=0.99\linewidth]{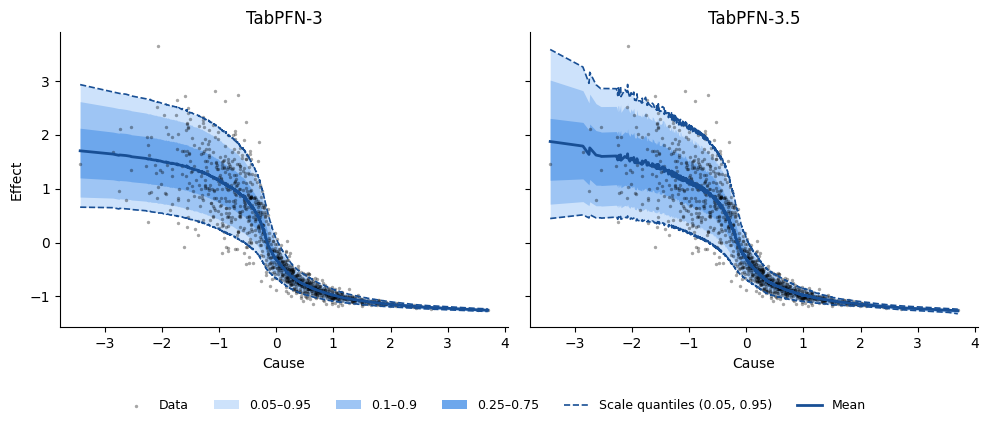}
    \caption{Shown are the estimates of the conditional mean and quantiles obtained from TabPFN-3 (left) and TabPFN-3.5 (right) on Pair~1 of the LSs dataset~\citep{tagasovska:20:bqcd}. It can be observed that the estimates for TabPFN-3 exhibit greater smoothness.}
    \label{fig:ablation-pfn35}
\end{figure}

\subsection{Additional Multivariate Experiments and Results}

\subsubsection{Results on the Sachs Dataset} \label{app:sachs}

We show the results in terms of F1, order errors, SHD and SID for the Sachs dataset in Table~\ref{tab:realworld-sachs}. We see that \textsc{CAM} and \textsc{HOST} perform best, despite their weak performance on Light Tunnel. \ourmethod and \ourmethod-GL perform equally with CAM in terms of order errors and fall short by 2 and 4, respectively, in terms of SHD to HOST. The amortized methods perform comparably to \ourmethod and \textsc{\ourmethod-GL}.

\begin{table}[h!]
    \setlength{\tabcolsep}{1.5mm}
    \caption{Results on F1, Order Errors (OE), SHD and SID for the Sachs Dataset. The best result for amortized and non-amortized models each is bolded.} \label{tab:realworld-sachs}
    \begin{center}
        \begin{tabular}{lrrrr}
            \toprule
            Method & F1 $\uparrow$ & OE $\downarrow$ & SHD $\downarrow$ & SID $\downarrow$ \\
            \midrule
            \multicolumn{5}{l}{\textbf{Sachs} ($n=853$, $d=11$, $\textit{\#edges}=17$)} \\
            \midrule
            \rowcolor{gray!20} \ourmethod & 0.296 & 7 & 19 & 55 \\
            \rowcolor{gray!20} \textsc{\ourmethod-GL} & 0.320 & 7 & 17 & 55 \\
            \rowcolor{gray!20} \textsc{\ourmethod-FGLS} & 0.150 & 9 & 34 & 55 \\
            \textsc{HOST} & 0.400 & \textbf{3} & \textbf{15} & \textbf{47} \\
            \textsc{SkewD} & 0.296 & 7 & 19 & 55 \\
            CAM & \textbf{0.444} & 7 & \textbf{15} & 55 \\
            LiNGAM & 0.320 & 7 & 17 & 55 \\
            \textsc{NOTEARS-MLP} & 0.333 & 6 & 16 & 48 \\
            \textsc{RESIT-TabPFN} & 0.286 & 9 & 25 & 52 \\
            \midrule
            \textsc{FoundCause} & 0.296 & \textbf{8} & 19 & \textbf{50} \\
            \textsc{CDFM} & \textbf{0.307} & 11 & \textbf{18} & \textbf{50} \\
            \bottomrule
        \end{tabular}
    \end{center}
\end{table}

\subsubsection{Tabular Comparison under Mixed Location-Scale Noise} \label{app:MVtabMixLSNM}

We complement the results on synthetic multivariate data with heterogeneous noise distributions shown in Figure~\ref{fig:LSs_quad} in Section~\ref{sec:experiments} by reporting the exact mean performances achieved by the baselines including \textsc{RESIT-OLS} in Table~\ref{tab:loscar_lss}. Precision and recall are additionally included, with each variant of \ourmethod generally obtaining higher precision than recall.

\begin{table*}[h!]
\centering
\caption{Results on Mean F1, Precision, Recall, Order Errors, SHD and SID on Synthetic Multivariate LSNM Data with Heterogeneous Noise. Baselines are split into conventional and
amortized. The best result per benchmark is bolded.}
\label{tab:loscar_lss}
\begin{center}
        \resizebox{\textwidth}{!}{
\begin{tabular}{l r r r r r r r r r r r r}
\toprule
 & \multicolumn{2}{c}{F1 $\uparrow$} & \multicolumn{2}{c}{Precision $\uparrow$} & \multicolumn{2}{c}{Recall $\uparrow$} & \multicolumn{2}{c}{Order errors $\downarrow$} & \multicolumn{2}{c}{SHD $\downarrow$} & \multicolumn{2}{c}{SID $\downarrow$} \\
\cmidrule(lr){2-3} \cmidrule(lr){4-5} \cmidrule(lr){6-7} \cmidrule(lr){8-9} \cmidrule(lr){10-11} \cmidrule(lr){12-13}
Method & d=6 & d=12 & d=6 & d=12 & d=6 & d=12 & d=6 & d=12 & d=6 & d=12 & d=6 & d=12 \\
\midrule
\rowcolor{gray!20} \textsc{Loscar} & \textbf{0.740} & 0.616 & \textbf{0.749} & 0.605 & 0.742 & 0.638 & 1.13 & 3.17 & 3.40 & 9.96 & 5.43 & 23.46 \\
\rowcolor{gray!20} \textsc{Loscar-GL} & \textbf{\underline{0.745}} & 0.620 & \textbf{\underline{0.753}} & 0.608 & \textbf{0.747} & 0.643 & \textbf{1.08} & 3.11 & \textbf{3.33} & 9.88 & \textbf{5.31} & 22.40 \\
\rowcolor{gray!20} \textsc{Loscar-FGLS} & 0.537 & 0.377 & 0.504 & 0.328 & 0.595 & 0.463 & 2.10 & 5.24 & 6.82 & 19.57 & 9.73 & 33.32 \\
\textsc{HOST} & 0.394 & 0.343 & 0.353 & 0.282 & 0.458 & 0.457 & 3.15 & 6.27 & 9.07 & 22.46 & 12.33 & 38.27 \\
\textsc{SkewD} & 0.715 & \textbf{0.714} & 0.748 & \textbf{\underline{0.762}} & 0.690 & \textbf{0.675} & 1.53 & 2.84 & \textbf{\underline{3.29}} & \textbf{\underline{6.37}} & 7.36 & 21.61 \\
\textsc{CAM} & 0.553 & 0.521 & 0.543 & 0.504 & 0.572 & 0.544 & 2.36 & 4.91 & 5.71 & 12.19 & 10.48 & 29.82 \\
\textsc{NOTEARS-MLP} & 0.404 & 0.384 & 0.400 & 0.376 & 0.420 & 0.399 & 2.99 & 6.14 & 7.39 & 15.24 & 13.78 & 40.41 \\
LiNGAM & 0.289 & 0.347 & 0.264 & 0.306 & 0.332 & 0.415 & 3.77 & 6.10 & 9.56 & 18.74 & 17.24 & 42.25 \\
\textsc{RESIT-TabPFN} & 0.701 & 0.535 & 0.703 & 0.510 & 0.710 & 0.573 & 1.27 & 3.78 & 3.88 & 12.52 & 6.11 & 27.34 \\
\textsc{RESIT-OLS} & 0.286 & 0.204 & 0.218 & 0.135 & 0.432 & 0.443 & 3.28 & 6.31 & 13.22 & 43.53 & 12.68 & 34.46 \\ \midrule
\textsc{FoundCause} & 0.728 & \textbf{\underline{0.732}} & 0.739 & \textbf{0.681} & \textbf{\underline{0.752}} & \textbf{\underline{0.829}} & \textbf{\underline{0.91}} & \textbf{\underline{1.27}} & 3.40 & \textbf{7.89} & \textbf{\underline{4.70}} & \textbf{\underline{10.01}} \\
\textsc{CDFM} & 0.629 & 0.600 & 0.587 & 0.554 & 0.703 & 0.673 & 1.45 & \textbf{2.77} & 5.20 & 10.82 & 6.56 & \textbf{20.24} \\
\bottomrule
\end{tabular}}
    \end{center}
\end{table*}

\subsubsection{Comparison under Mixed Additive Noise}\label{app:mixANM}

We visualize the results achieved by all baselines across the main metrics for the multivariate synthetic ANM datasets with heterogeneous noise via the bar plot given by Figure~\ref{fig:ANs_quad} showcasing both the mean and its standard error. Exact performances are supplied in Table~\ref{tab:loscar_ans} which also displays precision and recall. The strongest conventional baselines across all metrics are \textsc{SkewD} and \textsc{CAM} with the latter achieving better results. As an example, in terms of F1 \textsc{CAM} achieves 0.839 when $d=6$, whereas \ourmethod, \textsc{\ourmethod-GL} and \textsc{\ourmethod-FGLS} achieve 0.612, 0.632 and 0.718, respectively. Furthermore, \textsc{SkewD} and \textsc{CAM} outperform both amortized baselines in terms of F1 and SHD, whereas in terms of order errors and SID either \textsc{FoundCause} or \textsc{CDFM} yields the best result. 

Unlike in the LSNM case, \textsc{\ourmethod-FGLS} outperforms its quantile-based counterparts. Moreover, as in the LSNM case, the performances of all variants of \ourmethod drop notably when the dimensionality increases from $d=6$ to $d=12$. While this behavior is observed for most methods, there are some exceptions. Most notably, \textsc{CAM} is barely affected in terms of F1 and \textsc{FoundCause} improves when increasing $d$ in terms of Recall (0.797 to 0.903) and order errors (0.83 to 0.74) showcasing that amortized baselines can behave unexpectedly.

\begin{figure}[h!]
    \centering
    \includegraphics[width=0.99\textwidth]{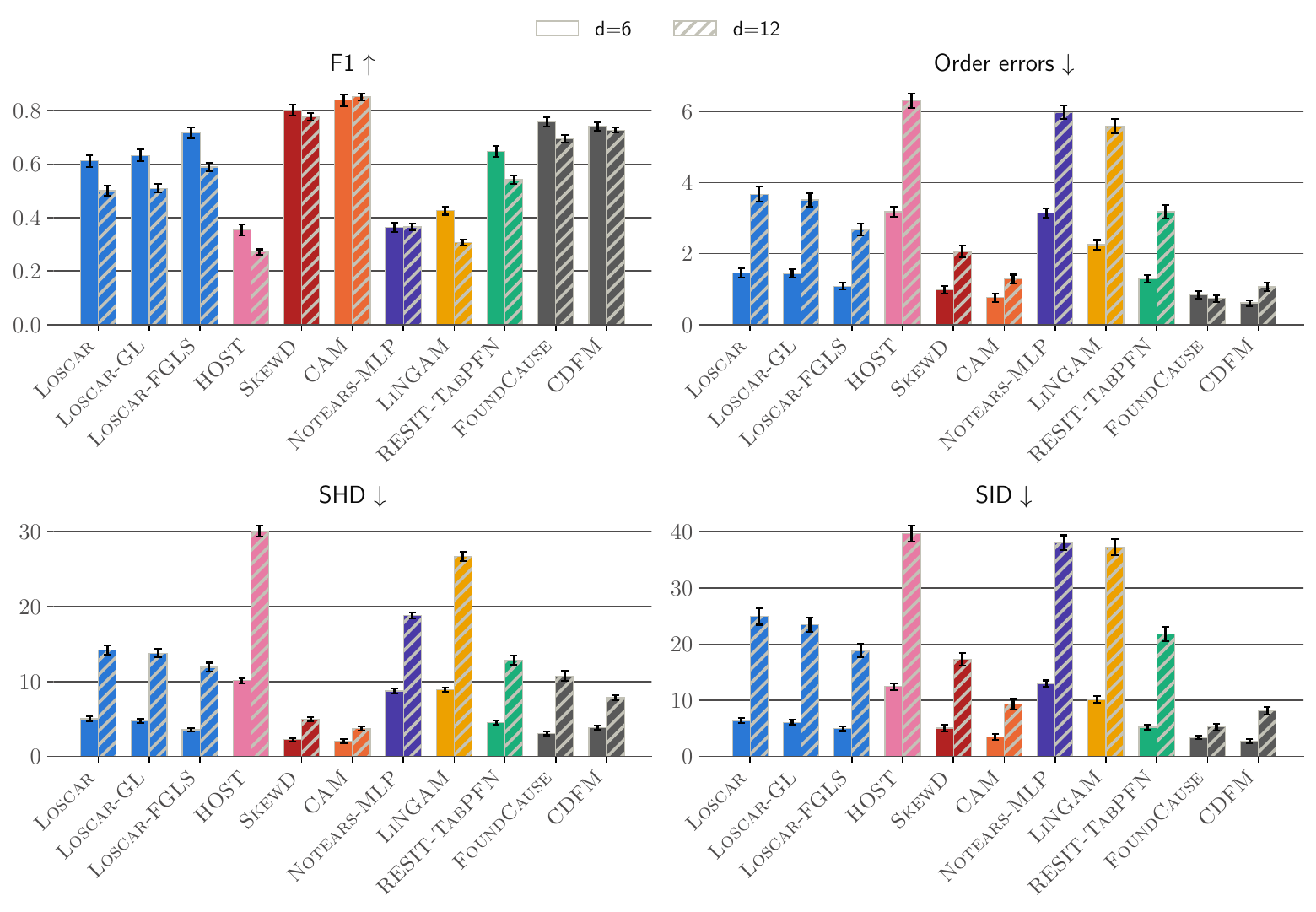}
    \caption{Comparison of \ourmethod and baselines in terms of F1, order errors, SHD and SID on ANM data.}
    \label{fig:ANs_quad}
\end{figure}

\begin{table*}[h!]
\centering
\caption{Results on Mean F1, Precision, Recall, Order Errors, SHD and SID on Synthetic Multivariate ANM Data with Heterogeneous Noise.}
\label{tab:loscar_ans}
\begin{center}
        \resizebox{\textwidth}{!}{
\begin{tabular}{l r r r r r r r r r r r r}
\toprule
 & \multicolumn{2}{c}{F1 $\uparrow$} & \multicolumn{2}{c}{Precision $\uparrow$} & \multicolumn{2}{c}{Recall $\uparrow$} & \multicolumn{2}{c}{Order errors $\downarrow$} & \multicolumn{2}{c}{SHD $\downarrow$} & \multicolumn{2}{c}{SID $\downarrow$} \\
\cmidrule(lr){2-3} \cmidrule(lr){4-5} \cmidrule(lr){6-7} \cmidrule(lr){8-9} \cmidrule(lr){10-11} \cmidrule(lr){12-13}
Method & d=6 & d=12 & d=6 & d=12 & d=6 & d=12 & d=6 & d=12 & d=6 & d=12 & d=6 & d=12 \\
\midrule
\rowcolor{gray!20} \textsc{Loscar} & 0.612 & 0.501 & 0.614 & 0.460 & 0.625 & 0.560 & 1.45 & 3.67 & 5.02 & 14.19 & 6.38 & 24.90 \\
\rowcolor{gray!20} \textsc{Loscar-GL} & 0.632 & 0.509 & 0.632 & 0.467 & 0.647 & 0.570 & 1.44 & 3.51 & 4.75 & 13.78 & 6.04 & 23.43 \\
\rowcolor{gray!20} \textsc{Loscar-FGLS} & 0.718 & 0.589 & 0.716 & 0.539 & 0.730 & 0.667 & 1.08 & 2.68 & 3.56 & 11.89 & 4.93 & 18.84 \\
\textsc{HOST} & 0.354 & 0.271 & 0.303 & 0.196 & 0.442 & 0.454 & 3.18 & 6.31 & 10.10 & 30.09 & 12.44 & 39.67 \\
\textsc{SkewD} & \textbf{0.802} & \textbf{0.777} & \textbf{\underline{0.840}} & \textbf{0.826} & 0.773 & 0.738 & 0.98 & 2.06 & \textbf{2.23} & \textbf{5.00} & 5.06 & 17.30 \\
\textsc{CAM} & \textbf{\underline{0.839}} & \textbf{\underline{0.852}} & \textbf{0.838} & \textbf{\underline{0.840}} & \textbf{0.847} & \textbf{0.869} & \textbf{0.76} & 1.28 & \textbf{\underline{2.05}} & \textbf{\underline{3.72}} & 3.45 & 9.30 \\
\textsc{NOTEARS-MLP} & 0.363 & 0.365 & 0.336 & 0.312 & 0.407 & 0.448 & 3.15 & 5.98 & 8.73 & 18.80 & 12.99 & 38.01 \\
LiNGAM & 0.426 & 0.307 & 0.353 & 0.229 & 0.560 & 0.482 & 2.25 & 5.59 & 8.93 & 26.66 & 10.14 & 37.27 \\
\textsc{RESIT-TabPFN} & 0.648 & 0.542 & 0.646 & 0.504 & 0.662 & 0.598 & 1.29 & 3.18 & 4.51 & 12.83 & 5.18 & 21.76 \\
\textsc{RESIT-OLS} & 0.648 & 0.527 & 0.597 & 0.435 & 0.738 & 0.711 & 1.29 & 2.87 & 5.38 & 16.94 & 5.79 & 19.39 \\        \midrule
\textsc{FoundCause} & 0.758 & 0.695 & 0.751 & 0.595 & 0.797 & \textbf{\underline{0.903}} & 0.83 & \textbf{\underline{0.74}} & 3.05 & 10.71 & \textbf{3.39} & \textbf{\underline{5.24}} \\
\textsc{CDFM} & 0.742 & 0.727 & 0.667 & 0.657 & \textbf{\underline{0.862}} & 0.838 & \textbf{\underline{0.61}} & \textbf{1.06} & 3.84 & 7.83 & \textbf{\underline{2.68}} & \textbf{8.12} \\
\bottomrule
\end{tabular}}
    \end{center}
\end{table*}

\subsubsection{Runtime Comparison under Mixed Noise Distribution} \label{app:runtime}

In Table~\ref{tab:loscar_runtime} and Figure~\ref{fig:runtime}, we show the average runtime for our synthetic data experiments with mixed noise distribution per mechanism type and number of nodes $d$. We can see that all approaches terminate within a reasonable time. While \textsc{SkewD} has the longest runtime (on CPU), taking on average less than 4 minutes for ANs with $d=12$, \ourmethod has a comparable runtime with $\sim 2$ minutes for the $d=12$ datasets, whereas \textsc{\ourmethod-GL} is faster and takes at most $\sim 75$ seconds for ANs with $d=12$. We note, however, that our method scales better than \textsc{SkewD} for larger datasets. \textsc{SkewD} did not terminate on Light Tunnel ($d=20$ with 39 edges) within 72 hours, whereas \ourmethod and \textsc{\ourmethod-GL} took around 85 and 110 minutes, respectively. All amortized methods finish below one second and the remaining conventional baselines, which we outperform in terms of performance, are also faster. HOST, the other heteroscedastic baseline, however, also takes 51 seconds on LSs ($d=12$).

\begin{table}[ht]
\caption{Mean $\pm$ Standard Deviation Runtime (Seconds) based on Synthetic Multivariate Data with Heterogeneous Noise. Baselines are split into conventional and
amortized. The best result per benchmark is bolded.}
\label{tab:loscar_runtime}
\begin{center}
\begin{tabular}{l r r r r}
\toprule
 & \multicolumn{2}{c}{LSs} & \multicolumn{2}{c}{ANs} \\
\cmidrule(lr){2-3} \cmidrule(lr){4-5}
Method & d=6 & d=12 & d=6 & d=12 \\
\midrule
\rowcolor{gray!20} \textsc{\ourmethod} & 17.90 $\pm$ 1.97 & 127.04 $\pm$ 21.30 & 17.61 $\pm$ 1.73 & 125.59 $\pm$ 22.04 \\
\rowcolor{gray!20} \textsc{\ourmethod-GL} & 32.42 $\pm$ 1.46 & 72.99 $\pm$ 4.41 & 31.41 $\pm$ 1.59 & 75.55 $\pm$ 3.48 \\
\rowcolor{gray!20} \textsc{\ourmethod-FGLS} & 0.58 $\pm$ 0.01 & 2.83 $\pm$ 0.03 & 0.58 $\pm$ 0.01 & 2.82 $\pm$ 0.03 \\
\textsc{HOST} & 17.70 $\pm$ 7.28 & 51.57 $\pm$ 35.48 & 13.04 $\pm$ 8.28 & 22.50 $\pm$ 1.13 \\
\textsc{SkewD} & 50.38 $\pm$ 25.91 & 165.84 $\pm$ 60.74 & 49.36 $\pm$ 23.37 & 215.87 $\pm$ 88.02 \\
\textsc{CAM} & 1.27 $\pm$ 0.13 & 2.99 $\pm$ 0.59 & 1.24 $\pm$ 0.12 & 2.61 $\pm$ 0.49 \\
\textsc{NOTEARS-MLP} & 2.78 $\pm$ 0.81 & 6.40 $\pm$ 1.32 & 3.68 $\pm$ 1.28 & 12.33 $\pm$ 7.07 \\
LiNGAM & 0.72 $\pm$ 0.10 & 0.83 $\pm$ 0.14 & 0.61 $\pm$ 0.12 & 0.68 $\pm$ 0.09 \\
\textsc{RESIT-TabPFN} & 18.87 $\pm$ 0.64 & 64.26 $\pm$ 2.72 & 19.02 $\pm$ 0.67 & 69.66 $\pm$ 8.44 \\
\textsc{RESIT-OLS} & 0.52 $\pm$ 0.01 & 2.32 $\pm$ 0.03 & 0.51 $\pm$ 0.00 & 2.30 $\pm$ 0.02 \\ \midrule
\textsc{FoundCause} & \textbf{0.36 $\pm$ 0.02} & \textbf{0.53 $\pm$ 0.04} & \textbf{0.37 $\pm$ 0.04} & \textbf{0.53 $\pm$ 0.04} \\
\textsc{CDFM} & \textbf{\underline{0.02}} $\pm$ 0.01 & \textbf{\underline{0.03}} $\pm$ 0.01 & \textbf{\underline{0.02}} $\pm$ 0.01 & \textbf{\underline{0.03}} $\pm$ 0.01 \\
\bottomrule
\end{tabular}
\end{center}
\end{table}

\begin{figure}
    \centering
    \includegraphics[width=0.99\linewidth]{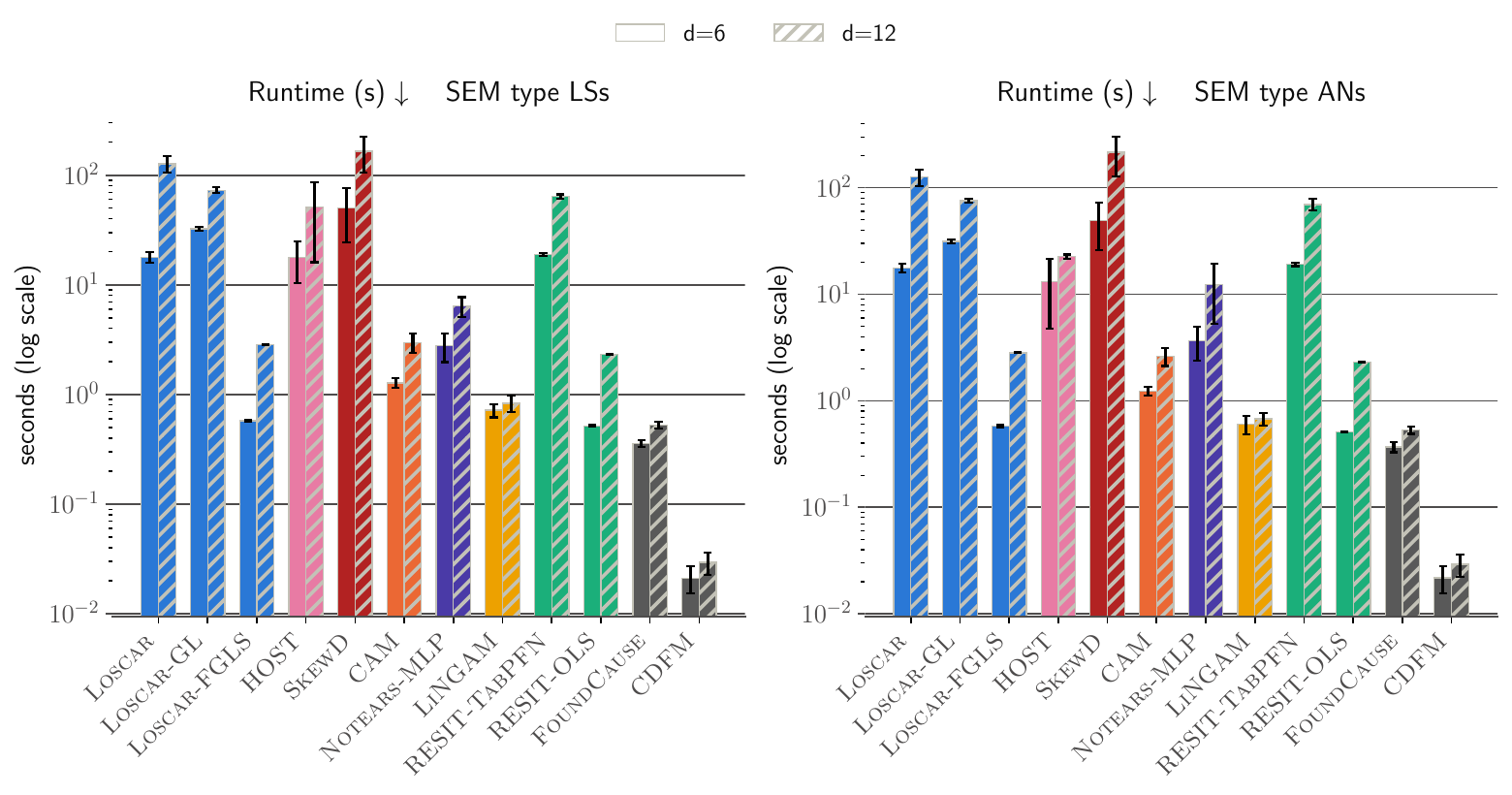}
    \caption{Average runtimes in seconds (s).}
    \label{fig:runtime}
\end{figure}

\subsubsection{Comparison under Homogeneous Noise Distributions} \label{app:homonoise}
In this section, we display and summarize the results that arise when the noise distribution in the generation of the synthetic multivariate data is held fixed at either exponential, Gaussian or skew-$t$ distribution. Apart from this nuance, the datasets were generated as in the heterogeneous noise case, where 100 datasets for both ANMs and LSNMs for $d=6,12$ were created. The parameterizations used are detailed in Section~\ref{app:datasets}.

We visualize the mean performances with respect to F1, order errors, SHD and SID as well as precision and recall for completeness using bar plots as in the previous sections. For LSNMs, the results are shown in Figures~\ref{fig:LSs_quad_exp}, \ref{fig:LSs_quad_gauss} and \ref{fig:LSs_quad_skewt} for the exponential, Gaussian, and skew-$t$ distribution, respectively. Equivalently, the ANM results are provided by Figures~\ref{fig:ANs_quad_exp}, \ref{fig:ANs_quad_gauss} and \ref{fig:ANs_quad_skewt}.

\paragraph{LSNM Setting}
Under exponential noise, \ourmethod and \textsc{\ourmethod-GL} constitute the strongest conventional baselines but are slightly outperformed by \textsc{FoundCause}. Similarly under skew-$t$ noise, \ourmethod and \textsc{\ourmethod-GL} perform best with regard to all metrics when $d=6$ with \textsc{SkewD} and \textsc{FoundCause} being competitive. However, when $d=12$, \textsc{SkewD} and \textsc{FoundCause} slightly outperform \ourmethod and \textsc{\ourmethod-GL}. Under Gaussianity, the assumptions of \textsc{SkewD} and \textsc{HOST} are fulfilled. This leads to notable performance gains in HOST compared to the other noise settings. \textsc{HOST} outperforms all variants of \ourmethod with respect to all metrics and dimensions and, together with \textsc{SkewD}, yields the best performances out of the conventional baselines. \textsc{FoundCause}, however, performs even better making it the most reliable choice here.

In summary, the results signalize that the quantile-based variants of \ourmethod are well-suited to handle asymmetric noise in LSNMs, as they achieve stable and strong performances across the different noise distributions. This is not surprising given the nonparametric nature of the method. In comparison, while also making no distributional assumptions, \textsc{\ourmethod-FGLS} performs poorly in the LSNM setting.

\paragraph{ANM Setting}
As in the mixed distribution setting, \textsc{\ourmethod-FGLS} outperforms the quantile variants of \ourmethod under ANMs across exponential and Gaussian noise. Interestingly, under the skew-$t$ distribution this is reversed. Moreover, the variants of \ourmethod remain competitive in the exponential settings, but are comparatively worse under Gaussian and skew-$t$ noise.

Under the exponential distribution, the best performances are achieved by \textsc{FoundCause}, \textsc{CDFM}, \textsc{CAM} and \textsc{RESIT-OLS} ($d=6$). All variants of \ourmethod are slightly worse but much more reliable than the other baselines. Compared to the LSNM case, \textsc{HOST} does not become a valid option in the Gaussian noise case, where \textsc{SkewD}, \textsc{CAM} as well as the amortized methods display the strongest results. This also applies to the skew-$t$ settings, where \textsc{CAM} performs best regarding every metric.

\begin{figure}[h!]
    \centering
    \includegraphics[width=0.99\linewidth]{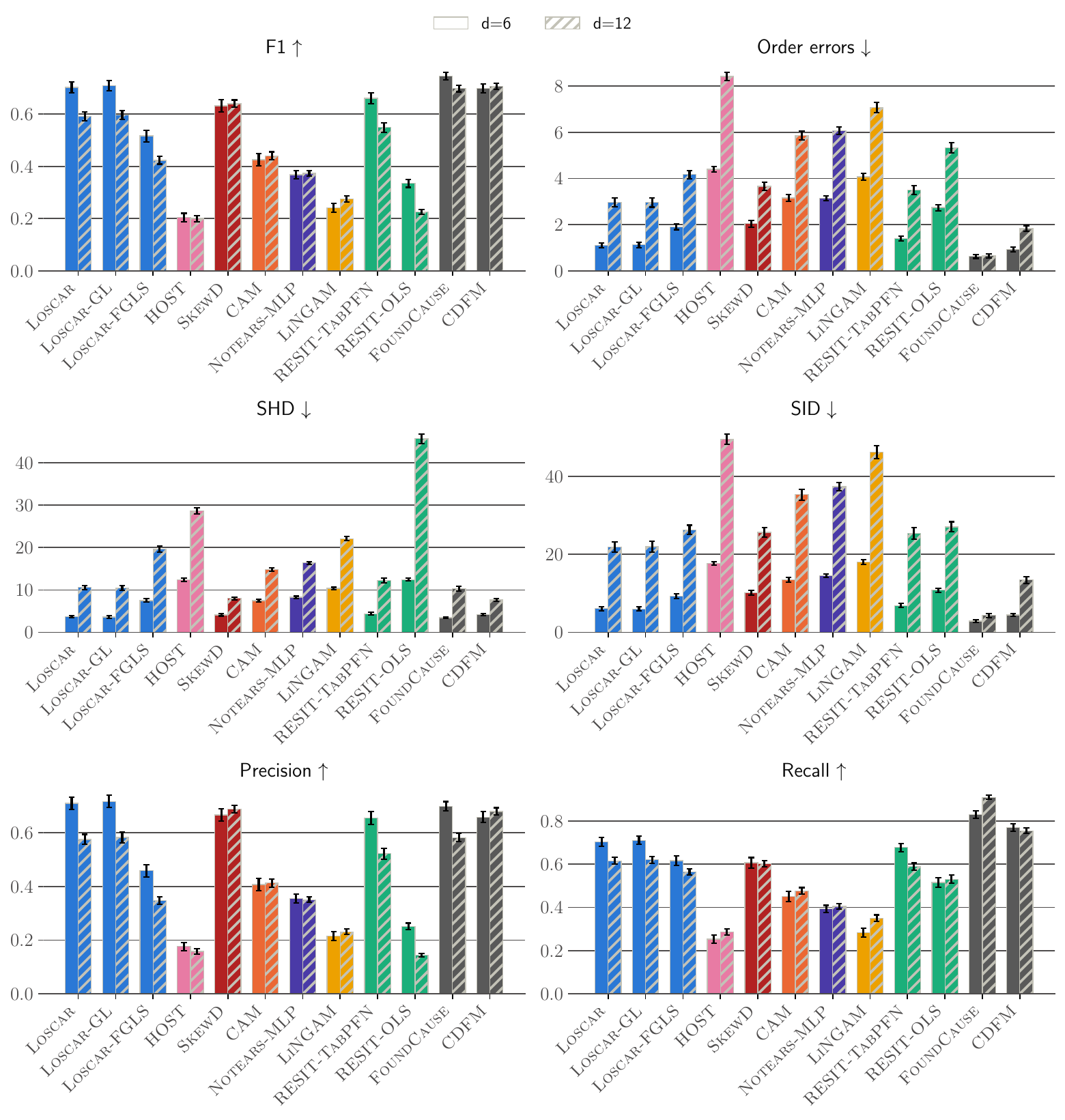}
    \caption{Comparison of \ourmethod and baselines in terms of F1, order errors, SHD and SID on LSNM data with exponential noise.}
    \label{fig:LSs_quad_exp}
\end{figure}

\begin{figure}[h!]
    \centering
    \includegraphics[width=0.99\linewidth]{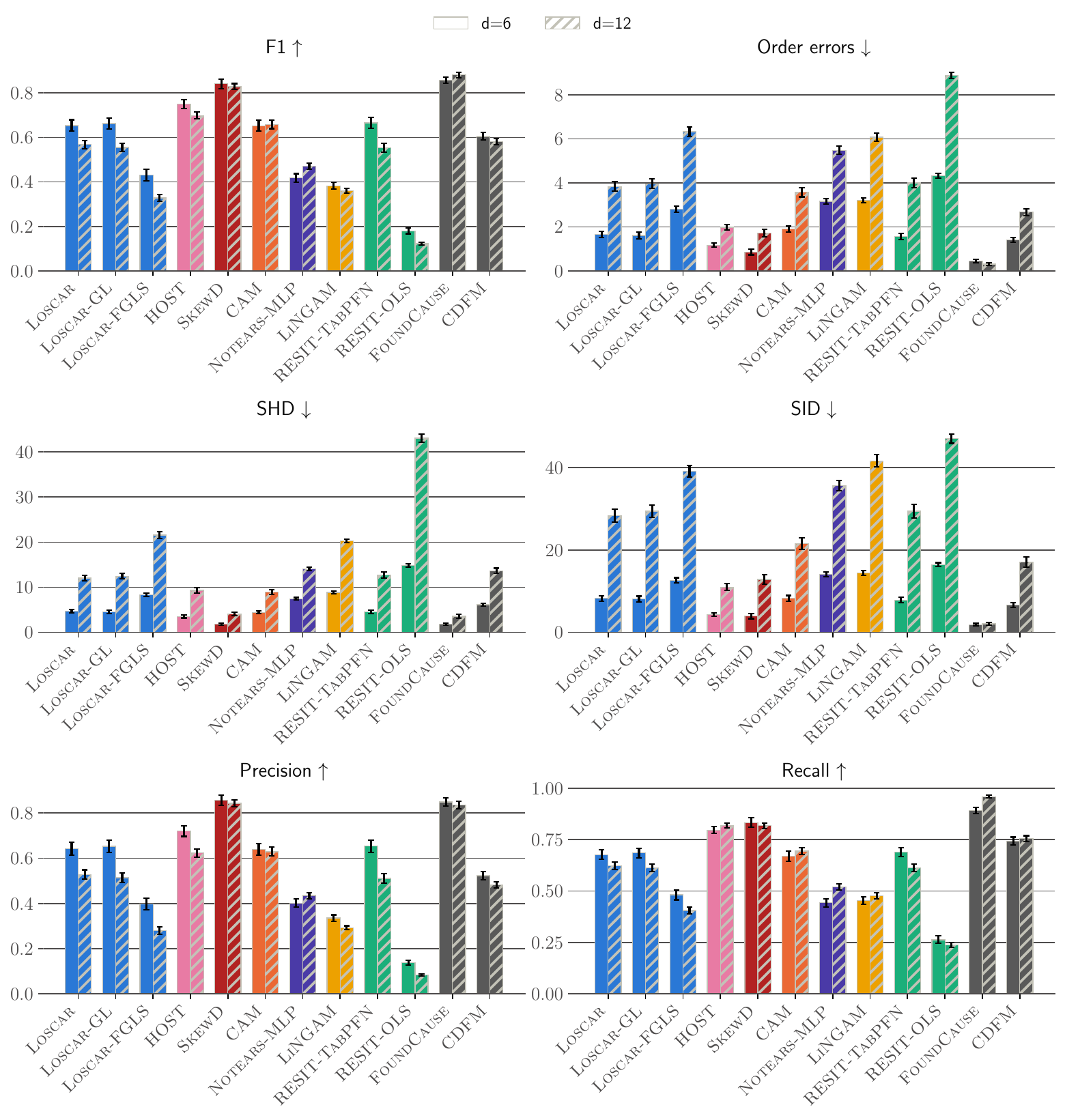}
    \caption{Comparison of \ourmethod and baselines in terms of F1, order errors, SHD and SID on LSNM data with Gaussian noise.}
    \label{fig:LSs_quad_gauss}
\end{figure}

\begin{figure}[h!]
    \centering
    \includegraphics[width=0.99\linewidth]{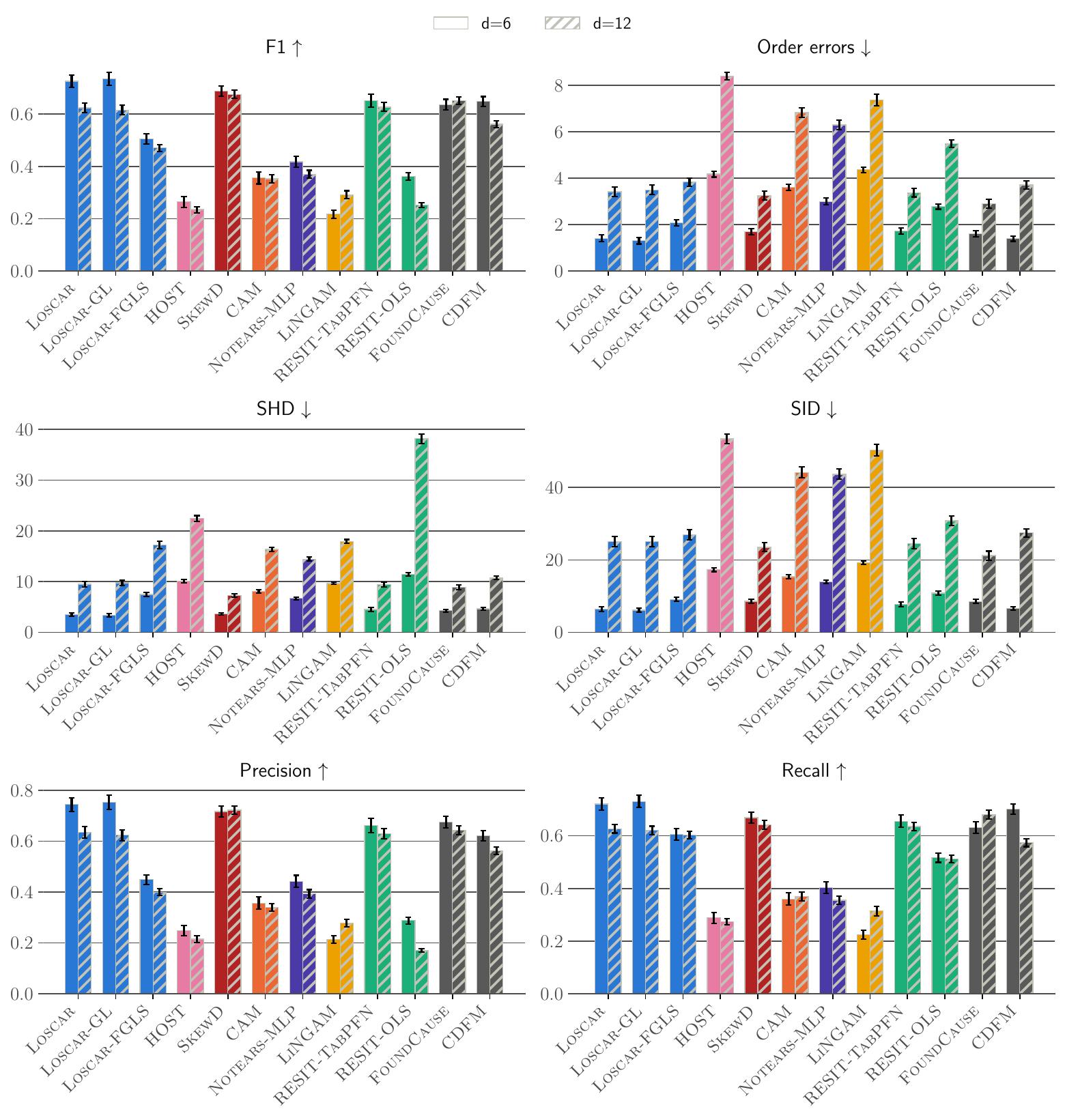}
    \caption{Comparison of \ourmethod and baselines in terms of F1, order errors, SHD and SID on LSNM data with skew-$t$ noise.}
    \label{fig:LSs_quad_skewt}
\end{figure}

\begin{figure}[h!]
    \centering
    \includegraphics[width=0.99\linewidth]{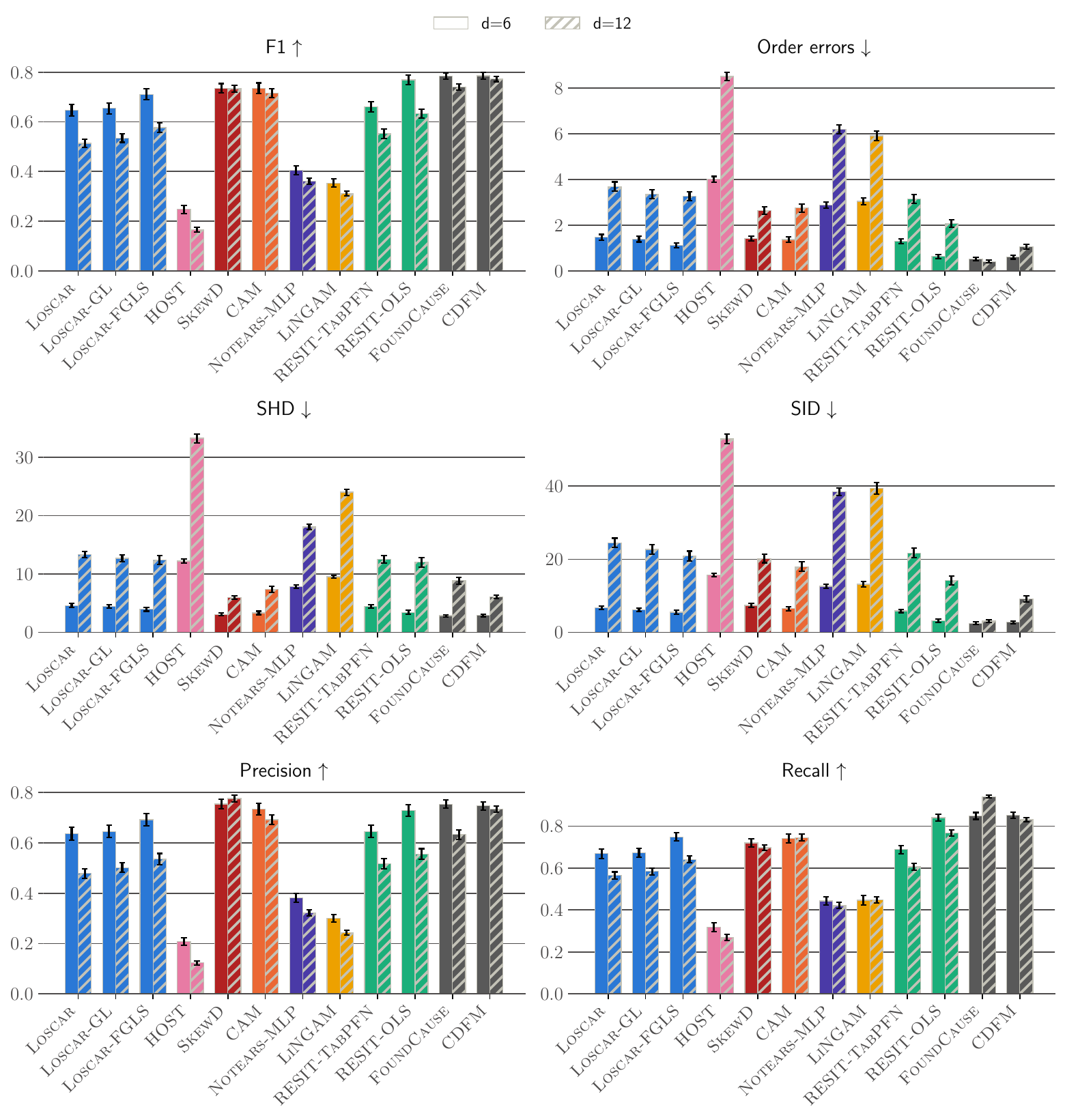}
    \caption{Comparison of \ourmethod and baselines in terms of F1, order errors, SHD and SID on ANM data with exponential noise.}
    \label{fig:ANs_quad_exp}
\end{figure}

\begin{figure}[h!]
    \centering
    \includegraphics[width=0.99\linewidth]{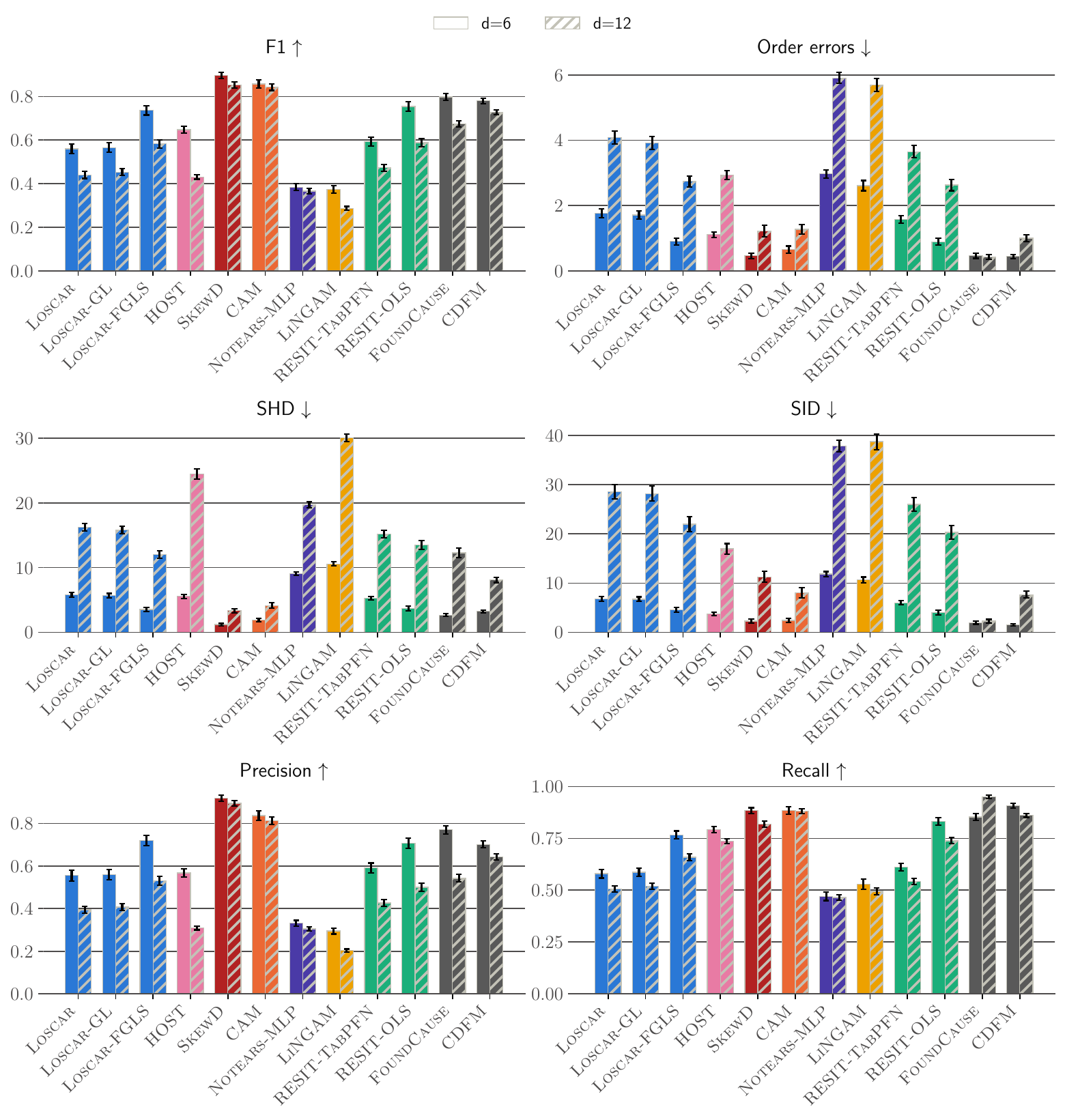}
    \caption{Comparison of \ourmethod and baselines in terms of F1, order errors, SHD and SID on ANM data with Gaussian noise.}
    \label{fig:ANs_quad_gauss}
\end{figure}

\begin{figure}[h!]
    \centering
    \includegraphics[width=0.99\linewidth]{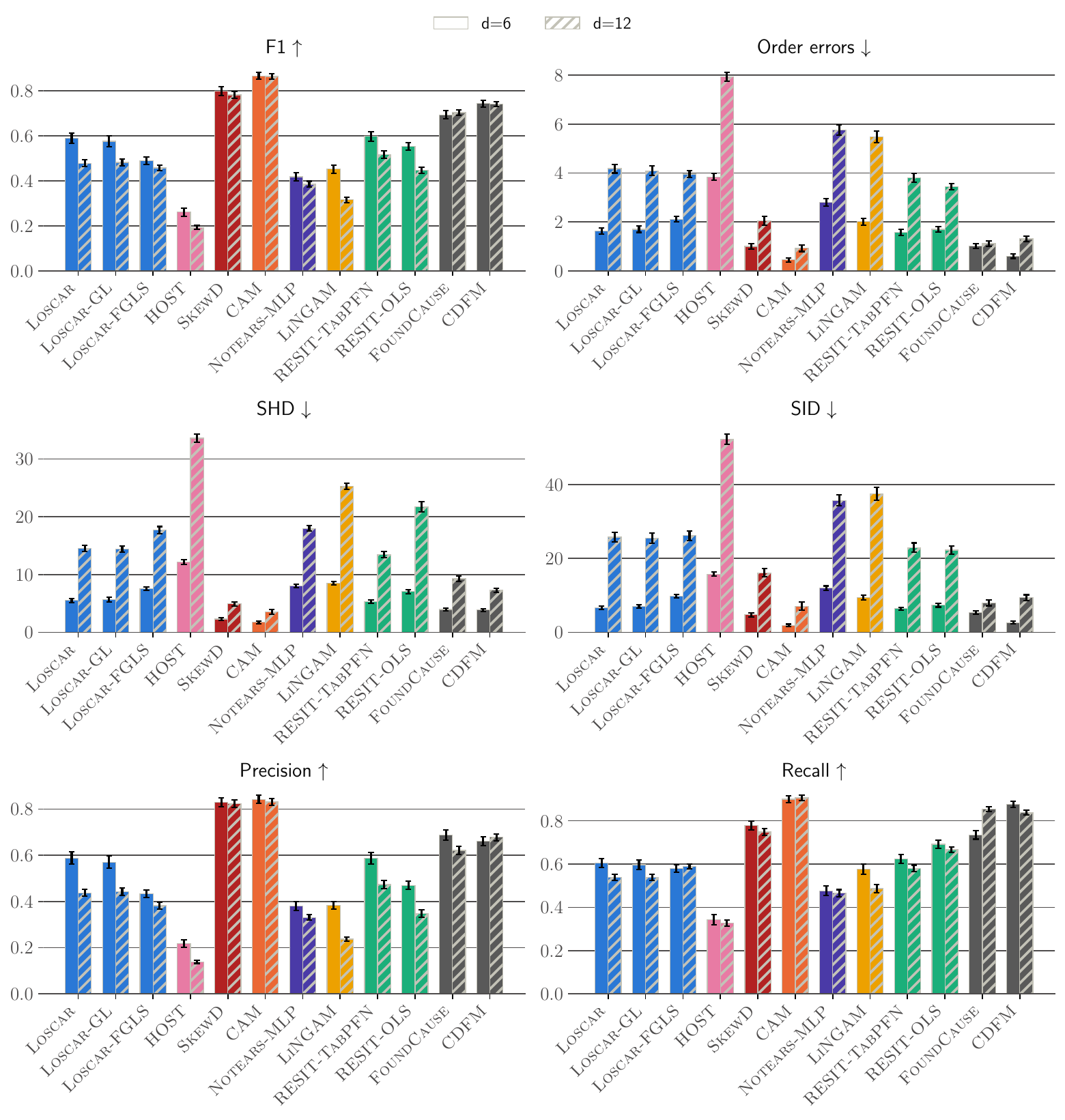}
    \caption{Comparison of \ourmethod and baselines in terms of F1, order errors, SHD and SID on ANM data with skew-$t$ noise.}
    \label{fig:ANs_quad_skewt}
\end{figure}


\end{document}

%% file: math_commands.tex
\usepackage{amsmath,amsfonts,bm}
\usepackage[dvipsnames]{xcolor}
\usepackage{cancel}

\definecolor{tu01}{HTML}{84B818}
\definecolor{tu02}{HTML}{D18B12}
\definecolor{tu03}{HTML}{1BB5B5}
\definecolor{tu04}{HTML}{F85A3E}
\definecolor{tu05}{HTML}{4B6CFC}
\definecolor{tu06}{HTML}{E3B505}
\definecolor{tu07}{HTML}{AF331D}
\definecolor{tu08}{HTML}{000000}
\definecolor{tu09}{HTML}{AAAAAA}
\definecolor{tu10}{HTML}{444444}
\definecolor{tu11}{HTML}{84B818}

\colorlet{tu01light}{tu01!33}
\colorlet{tu02light}{tu02!33}
\colorlet{tu03light}{tu03!33}
\colorlet{tu04light}{tu04!33}
\colorlet{tu05light}{tu05!33}
\colorlet{tu06light}{tu06!33}
\colorlet{tu07light}{tu07!33}
\colorlet{tu08light}{tu08!33}
\colorlet{tu09light}{tu09!33}
\colorlet{tu10light}{tu10!33}
\colorlet{tu11light}{tu11!33}

\colorlet{tu01midlight}{tu01!50}
\colorlet{tu02midlight}{tu02!50}
\colorlet{tu03midlight}{tu03!50}
\colorlet{tu04midlight}{tu04!50}
\colorlet{tu05midlight}{tu05!50}
\colorlet{tu06midlight}{tu06!50}
\colorlet{tu07midlight}{tu07!50}
\colorlet{tu08midlight}{tu08!50}
\colorlet{tu09midlight}{tu09!50}
\colorlet{tu10midlight}{tu10!50}
\colorlet{tu11midlight}{tu11!50}

\colorlet{tu01dark}{tu01!80!black}
\colorlet{tu02dark}{tu02!80!black}
\colorlet{tu03dark}{tu03!80!black}
\colorlet{tu04dark}{tu04!80!black}
\colorlet{tu05dark}{tu05!80!black}
\colorlet{tu06dark}{tu06!80!black}
\colorlet{tu07dark}{tu07!80!black}
\colorlet{tu08dark}{tu08!80!black}
\colorlet{tu09dark}{tu09!80!black}
\colorlet{tu10dark}{tu10!80!black}
\colorlet{tu11dark}{tu11!80!black}

\colorlet{lightgray}{gray!25}
\colorlet{midlightgray}{gray!50}
\colorlet{anthracite}{black!85}

\colorlet{tudo}{tu01}
\colorlet{tuorange}{tu02}
\colorlet{tudolight}{tu01light}

\definecolor{TUGreen}{HTML}{639A00} 

\def\1{\bm{1}}

\def\mN{{\bm{N}}}

\def\mV{{\bm{V}}}

\def\mX{{\bm{X}}}

\DeclareMathAlphabet{\mathsfit}{\encodingdefault}{\sfdefault}{m}{sl}
\SetMathAlphabet{\mathsfit}{bold}{\encodingdefault}{\sfdefault}{bx}{n}

\def\gC{{\mathcal{C}}}

\def\gE{{\mathcal{E}}}

\def\gG{{\mathcal{G}}}

\def\gS{{\mathcal{S}}}

\def\E{{\mathbb{E}}}

\newcommand{\Indep}{\mathop{\perp\!\!\!\perp}\nolimits}

\definecolor{lightgray}{gray}{0.85}
\definecolor{lightlightgray}{gray}{0.9}
\definecolor{C1}{HTML}{1F77B4}
\definecolor{C2}{HTML}{FF7F0E}
\definecolor{C3}{HTML}{2CA02C}
\definecolor{C4}{HTML}{D62728}
\definecolor{C5}{HTML}{9467BD}
\definecolor{Cg}{HTML}{7F7F7F}
\colorlet{C1light}{C1!90!white}
\colorlet{C2light}{C2!90!white}
\colorlet{C3light}{C3!90!white}
\colorlet{C4light}{C4!90!white}
\colorlet{C5light}{C5!90!white}
\colorlet{Cglight}{Cg!90!white}
\colorlet{C1lighter}{C1!40!white}
\colorlet{C2lighter}{C2!40!white}
\colorlet{C3lighter}{C3!40!white}
\colorlet{C4lighter}{C4!40!white}
\colorlet{C5lighter}{C5!40!white}
\colorlet{C1vlight}{C1!20!white}
\colorlet{C2vlight}{C2!20!white}
\colorlet{C3vlight}{C3!20!white}
\colorlet{C4vlight}{C4!20!white}
\colorlet{C5vlight}{C5!20!white}

\RequirePackage{todonotes}

